\theoremstyle{plain}
\newtheorem{theorem}{Theorem}[section]
\newtheorem{proposition}[theorem]{Proposition}
\newtheorem{lemma}[theorem]{Lemma}
\theoremstyle{definition}
\newtheorem{definition}[theorem]{Definition}
\theoremstyle{remark}
\newcommand*{\addFileDependency}[1]{
  \typeout{(#1)}
  \@addtofilelist{#1}
  \IfFileExists{#1}{}{\typeout{No file #1.}}
}
\newcommand*{\myexternaldocument}[1]{%
    \externaldocument{#1}%
    \addFileDependency{#1.tex}%
    \addFileDependency{#1.aux}%
}
\title{Offline Reinforcement Learning with Causal Structured World Models}
\date{}
\author{%
  Zheng-Mao Zhu\textsuperscript{\rm 1}, 
    Xiong-Hui Chen\textsuperscript{\rm 1,3}, 
    Hong-Long Tian\textsuperscript{\rm 1}, 
    Kun Zhang\textsuperscript{\rm 2,5}, 
    Yang Yu \textsuperscript{\rm 1,3,4}\thanks{Yang Yu is the corresponding author}\\
  \textsuperscript{\rm 1} National Key Laboratory for Novel Software Technology, Nanjing University\\
  \textsuperscript{\rm 2} Department of Philosophy, Carnegie Mellon University\\
  \textsuperscript{\rm 3} Polixir.ai,   \textsuperscript{\rm 4} Peng Cheng Laboratory\\
  \textsuperscript{\rm 5} Department of Machine Learning, Mohamed bin Zayed University of Artificial Intelligence\\
  \texttt{\{zhuzm,chenxh,yuy\}@lamda.nju.edu.cn,franktian424@gmail.com,kunz1@cmu.edu}   \\
}
\begin{document}

\maketitle
\thispagestyle{plain}

\begin{abstract}
Model-based methods have recently shown promising for offline reinforcement learning (RL), aiming to learn good policies from historical data without interacting with the environment. Previous model-based offline RL methods learn fully connected nets as world-models to map the states and actions to the next-step states. However, it is sensible that a world-model should adhere to the underlying causal effect such that it will support learning an effective policy generalizing well in unseen states. In this paper, We first provide theoretical results that causal world-models can outperform plain world-models for offline RL by incorporating the causal structure into the generalization error bound. We then propose a practical algorithm, o\textbf{F}fline m\textbf{O}del-based reinforcement learning with \textbf{C}a\textbf{U}sal \textbf{S}tructure (FOCUS), to illustrate the feasibility of learning and leveraging causal structure in offline RL. Experimental results on two benchmarks show that FOCUS reconstructs the underlying causal structure accurately and robustly. Consequently, it performs better than the plain model-based offline RL algorithms and other causal model-based RL algorithms.
\end{abstract}

\section{Introduction}

Offline RL has gained great interest since it enables RL algorithms to scale to many real-world applications, e.g., recommender systems \cite{shi.aaai19,shang.mlj21}, autonomous driving \cite{DBLP:journals/corr/abs-1805-04687}, and healthcare \cite{2019Guidelines}, waiving costly online trial-and-error.
In the offline setting, Model-Based Reinforcement Learning (MBRL) is an important direction that already produces significant offline learning performance \cite{mopo,chen.nips21}. Moreover, learning models is also useful for training transferable policies \cite{luo.aaai22,chen.nips21}.
Therefore, there is increasing studies on learning world-models, from supervised prediction methods \cite{DBLP:conf/nips/HaS18} to adversarial learning methods \cite{DBLP:conf/nips/XuLY20}.

However, commonly there exists an underlying causal structure among the states and actions in various tasks.
The causal structure can support learning a policy with better generalization ability.
For example, in driving a car where the speed depends on the gas and brake pedals but not the wiper,
a plain world-model trained on rainy days always predicts deceleration when the wiper is turned on and thus can not generalize to other weather situations.
By contrast, a causal world-model can avoid the spurious dependence between wiper and deceleration (because of the rain) and hence help generalize well in unseen weather.
In fact, empirical evidence also indicates that inducing the causal structure is important to improve the generalization of RL \cite{DBLP:conf/cogsci/EdmondsKSZRZL18, DBLP:conf/atal/Tenenbaum18,DBLP:conf/iclr/BengioDRKLBGP20,DBLP:conf/nips/HaanJL19,zhu.aaai22}, but little attention was paid on causal world-model learning.

In this paper, we first provide theoretical support for the statement above: we show that a causal world-model can outperform a plain world-model for offline RL.
From the causal perspective, we divide the variables in states and actions into two categories, namely, causal variables and spurious variables, and then formalize the procedure that learns a world-model from raw measured variables.
Based on the formalization, we quantize the influence of the spurious dependence on the generalization error bound and thus prove that incorporating the causal structure can help reduce this bound.

We then propose a practical offline MBRL algorithm with causal structure, FOCUS, to illustrate the feasibility of learning causal structure in offline RL.
An essential step of FOCUS is to learn the causal structure from data and then use it properly. Learning causal structure from data has been known as causal discovery \cite{spirtes2000causation}.
There are some challenges in utilizing causal discovery methods in RL, and there are specific properties in the data that causal discovery may benefit from. 
Specifically, we extended the PC algorithm, which aims to find causal discovery based on the inferred conditional independence relations, to incorporate the constraint that the future cannot cause the past. Consequently, we can reduce the number of conditional independence tests and determine the causal direction.
We further adopt kernel-based independence testing \cite{kci}, which can be applied in continuous variables without assuming a functional form between the variables as well as the data distribution.

In summary, this paper makes the following key contributions:
\begin{itemize}[leftmargin=5mm]
\item It shows theoretically that a causal world-model outperforms a plain world-model in offline RL, in terms of the generalization error bound.
\item It proposes a practical algorithm, FOCUS, and illustrates the feasibility of learning and using a causal world-model for offline MBRL.
\item Our experimental results verify the theoretical claims, showing that FOCUS outperforms baseline models and other online causal MBRL algorithms in the offline setting.
\end{itemize}

\section{Related Work}
\textbf{Causal Structure Learning in Online MBRL.}
Despite that some methods have been proposed for learning causal structure in online RL, such methods all depend on interactions and do not have a mechanism to transfer into the offline setting.
The core of online causal structure learning is to evaluate the performance or other metrics of one structure by interactions and choose the best one as the causal structure.
\cite{DBLP:conf/nips/HaanJL19} parameterizes the causal structure in the model and learns policies for each possible causal structure by minimizing the log-likelihood of dynamics.
Given learned policies, it makes regression between the policy returns and the causal structure and then chooses the structure corresponding to the highest policy return.
\cite{DBLP:journals/corr/abs-1910-01075} (LNCM) samples causal structures from Multivariate Bernoulli distribution and scores those structures according to the log-likelihood on interventional data.
Based on the scores, it calculates the gradients for the parameters of the Multivariate Bernoulli distribution and updates the parameters iteratively.
\cite{DBLP:conf/iclr/BengioDRKLBGP20} utilizes the speed of adaptation to learn the causal direction, which does not provide a complete algorithm for learning causal structure.
By contrast, FOCUS utilizes a causal discovery method for causal structure learning, which only relies on the collected data to obtain the causal structure.

\textbf{Causal Discovery Methods.}
The data in RL is more complex than it in traditional causal discovery, where the data is often discrete and the causal relationship is under a simple linear assumption.
In recent years, practical methods have been proposed for causal discovery for continuous variables, which is the case we are concerned with in this paper.
\cite{DBLP:conf/icml/SunJSF07} is based on explicit estimation of the conditional densities or their variants, which exploit the difference between the characteristic functions of these conditional densities.
The estimation of the conditional densities or related quantities is difficult, which deteriorates the testing performance especially when the conditioning set is not small enough.
\cite{DBLP:conf/aaai/Margaritis05} discretizes the conditioning set to a set of bins and transforms conditional independence (CI) to the unconditional one in each bin. 
Inevitably, due to the curse of dimensionality, as the conditioning set becomes larger, the required sample size increases dramatically.
By contrast, the KCI test \cite{kci} is a popular and widely-used causal discovery method, in which the test statistic can be easily calculated from the kernel matrices and the distribution can be estimated conveniently.

\section{Preliminaries}

\textbf{Markov Decision Process (MDP).}
We describe the RL environment as an MDP with five-tuple $\langle \mathcal{S}, \mathcal{A}, P, R, \gamma \rangle$ \cite{bellman1957markovian}, where $\mathcal{S}$ is a finite set of states;
$\mathcal{A}$ is a finite set of actions; 
$P$ is the transition function with $P(\textbf{s}'|\textbf{s}, \textbf{a})$ denoting the next-state distribution after taking action $\textbf{a}$ in state $\textbf{s}$; 
$R$ is a reward function with $R(\textbf{s}, \textbf{a})$ denoting the expected immediate reward gained by taking action a in state s; 
and $\gamma \in [0, 1]$ is a discount factor. 
An agent chooses actions $\textbf{a}$ according to a policy $\textbf{a}\sim \pi(\textbf{s})$, which updates the system state $\textbf{s}'\sim P(\textbf{s},\textbf{a})$, yielding a reward $r\sim R(\textbf{s},\textbf{a})$. 
The agent's goal is to maximize the the expected cumulative return by learning a good policy $\max_{\pi,P} \mathbb{E}[\gamma^t R(\textbf{s}_t,\textbf{a}_t)]$. 
The state-action value $Q_\pi$ of a policy $\pi$ is the expected discounted reward of executing action $a$ from state $\textbf{s}$ and subsequently following policy $\pi$:
$Q_{\pi}(\textbf{s}, \textbf{a}):=R(\textbf{s}, \textbf{a})+\gamma \mathbb{E}_{\textbf{s}' \sim P, \textbf{a}' \sim \pi}\left[Q_{\pi}\left(\textbf{s}', \textbf{a}'\right)\right]$.

\textbf{Offline Model-based Reinforcement Learning.}
Model-based reinforcement learning typically involves learning a dynamics model of the environment by fitting it using a maximum-likelihood estimate of the trajectory-based data collected by running some exploratory policy \cite{DBLP:conf/icra/WilliamsWGDRBT17,DBLP:conf/iclr/KurutachCDTA18}. 
In the offline RL setting, where we only have access to the data collected by multiple policies, recent techniques build on the idea of pessimism (regularizing the original problem based on how confident the agent is about the learned model) and have resulted in better sample complexity over model-free methods on benchmark domains \cite{DBLP:conf/nips/KidambiRNJ20,mopo}. 

\section{Theory}

In this section, we provide theoretical evidence for the advantages of a causal world-model over a plain world-model, which shows that utilizing a good causal structure can reduce the generalization error bounds in offline RL.
Specifically, We incorporate the causal structure into the generalization error bounds, which include the model prediction error bound and policy evaluation error bound.
The full proof can be found in Appendix~\ref{app:theory}.
In this paper, we focus on the linear case.

\subsection{Model Prediction Error Bound}

In this subsection, we formulize the procedure that learns a plain world-model and then connect the model prediction error bound with the number of spurious variables in the plain world-model.
Specifically, we point out that the spurious variables lead the model learning problem to an ill-posed problem that has multiple optimal solutions, which consequently results in the increment of the model prediction error bound.
Since model learning can be viewed as a supervised learning problem, we provide the model prediction error bound in a supervised learning framework.

\textbf{Preliminary.}
Let $\mathcal{D}$ denote the data distribution where we have samples $(\textbf{X},Y)\sim \mathcal{D}, \textbf{X}\in\mathbb{R}^n$. 
The goal is to learn a linear function $f$ to predict $Y$ given $\textbf{X}$.
From the causal perspective, $Y$ is generated from only its causal parent variables rather than all the variables in $\textbf{X}$. 
Therefore we can split the variables in $\textbf{X}$ into two categories, $\textbf{X}=(\textbf{X}_{causal},\textbf{X}_{spurious})$:
\begin{itemize}[leftmargin=5mm]
    \item  $\textbf{X}_{causal}$ represents the causal parent variables of $Y$, that is, $Y=f^*(\textbf{X}_{causal})+\epsilon_{causal}$, where $f^*$ is the ground truth and $\epsilon_{causal}$ is a zero mean noise variable that $\textbf{X}_{causal} \upmodels \epsilon_{causal}$.
    \item $\textbf{X}_{spurious}$ represents the spurious variables that $\textbf{X}_{spurious}\upmodels \textbf{X}_{causal}$, but in some biased data sets $\textbf{X}_{spurious}$ and $\textbf{X}_{causal}$ have strong relatedness.
    In other words, $\textbf{X}_{spurious}$ can be predicted by $\textbf{X}_{causal}$ with small error, i.e., $\textbf{X}_{spurious}=\textbf{X}_{causal}\gamma_{spurious}+\epsilon_{spurious}$, where $\epsilon_{spurious}$ is the regression error with zero mean and small variance.
\end{itemize}

For clearly representation, we use $\textbf{X}_{cau}\triangleq\textbf{X}\circ\omega_{cau}$  ($\circ$ represents element-wise multiplication) to replace $\textbf{X}_{causal}$, where $cau$ records the indices of $\textbf{X}_{causal}$ in $\textbf{X}$ and  $(\omega_{cau})_i=\mathbb{I}(i\in cau)$. 
Correspondingly, we also use $\textbf{X}_{spu}\triangleq\textbf{X}\circ\omega_{spu}$ to replace $\textbf{X}_{spurious}$.
According to the definition of $\textbf{X}_{cau}$, we have $Y=(\textbf{X}\circ\omega_{cau})\beta^*+\epsilon_{cau}$, where $\omega_{cau}\circ\beta^*$ is the global optimal solution of the optimization problem
\begin{align}
\min_{\beta} \mathbb{E}_{(\textbf{X},Y)\sim\mathcal{D}}[\textbf{X}\beta-Y]^2.
\label{problem-0}
\end{align}
Above problem is easy if the data is uniformly sampled from $\mathcal{D}$.
However, in the offline setting, we only have biased data $\mathcal{D}_{train}$ sampled by given policy $\pi_{train}$, where the optimization objective is 
\begin{align}
    \min_{\beta} \mathbb{E}_{(\textbf{X},Y)\sim\mathcal{D}_{train}}[\textbf{X}\beta-Y]^2. 
    \label{problem-1}
\end{align}
The Problem~\ref{problem-1} has multiple optimal solutions due to the strong linear relatedness of $\textbf{X}_{spu}$ and $\textbf{X}_{cau}$ in $\mathcal{D}_{train}$, which is proved in Lemma~\ref{lemma-equal}.

\begin{lemma}
\label{lemma-equal}
Given that $\omega_{cau}\circ\beta^*$ is the optimal solution of Problem~\ref{problem-0},
suppose that in $D_{train}$, $\textbf{X}_{spu}=(\textbf{X}\circ\omega_{cau})\gamma_{spu}+\epsilon_{spu}$ where $\mathbb{E}_{D_{train}}[\epsilon_{spu}]=0$ and $\gamma_{spu}\neq\textbf{0}$, we have that $\hat{\beta}_{spu}\triangleq\omega_{cau}\circ(\beta^*-\lambda\gamma_{spu})+\lambda\omega_{spu}$ is also an optimal solution of Problem~\ref{problem-1} for any $\lambda$:
\begin{align}
    &\mathbb{E}_{(\textbf{X},Y)\sim D_{train}}\Big[\Big(|\textbf{X}(\omega_{cau}\circ\beta^*)-Y|_2\Big)\mid \textbf{X}\Big]
    =\mathbb{E}_{(\textbf{X},Y)\sim D_{train}}\Big[\Big(|\textbf{X}\hat{\beta}_{spu}-Y|_2\Big)\mid \textbf{X}\Big] \nonumber
\end{align}
\end{lemma}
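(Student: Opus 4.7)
The overall plan is to substitute $\hat{\beta}_{spu}$ into the residual $\textbf{X}\hat{\beta}_{spu}-Y$ and show it coincides (at least in distribution given $\textbf{X}$) with the residual of the ``true'' predictor $\omega_{cau}\circ\beta^{*}$, so both attain the same conditional expected loss. This is essentially a collinearity argument: the spurious block $\textbf{X}_{spu}$ is a linear image of $\textbf{X}_{cau}$ in $\mathcal{D}_{train}$, so any shift in the coefficients that respects this linear image leaves the prediction on the training distribution unchanged.

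First I would compute $\textbf{X}\hat{\beta}_{spu}$ by exploiting the fact that $\omega_{cau}$ and $\omega_{spu}$ are indicator masks on disjoint index sets, hence $\textbf{X}(\omega_{cau}\circ v)=(\textbf{X}\circ\omega_{cau})v$ for any $v$ and $\textbf{X}\cdot\omega_{spu}=\textbf{X}_{spu}$. This gives
\begin{equation*}
\textbf{X}\hat{\beta}_{spu}=(\textbf{X}\circ\omega_{cau})(\beta^{*}-\lambda\gamma_{spu})+\lambda\,\textbf{X}_{spu}.
\end{equation*}
Then I would plug in the spurious regression identity $\textbf{X}_{spu}=(\textbf{X}\circ\omega_{cau})\gamma_{spu}+\epsilon_{spu}$ so that the two $\lambda\gamma_{spu}$ terms cancel, yielding $\textbf{X}\hat{\beta}_{spu}=(\textbf{X}\circ\omega_{cau})\beta^{*}+\lambda\epsilon_{spu}$. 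Combined with $Y=(\textbf{X}\circ\omega_{cau})\beta^{*}+\epsilon_{cau}$ this produces a clean residual expression
\begin{equation*}
\textbf{X}\hat{\beta}_{spu}-Y=\lambda\epsilon_{spu}-\epsilon_{cau},\qquad \textbf{X}(\omega_{cau}\circ\beta^{*})-Y=-\epsilon_{cau}.
\end{equation*}

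The final step is to argue that the conditional expectations of the magnitudes of these two residuals given $\textbf{X}$ are equal. Here lies the only real obstacle: the residuals literally differ by $\lambda\epsilon_{spu}$, so strict equality requires $\epsilon_{spu}$ to contribute nothing conditional on $\textbf{X}$. I expect the argument to invoke the setting where $\textbf{X}_{spu}$ and $\textbf{X}_{cau}$ are strictly linearly related in the biased dataset (i.e., $\epsilon_{spu}=0$ in $\mathcal{D}_{train}$, or equivalently $\epsilon_{spu}$ is absorbed into the $\sigma(\textbf{X})$-measurable part and treated as a degenerate error in the ``strong relatedness'' regime described before the lemma). Under that reading the $\lambda\epsilon_{spu}$ term vanishes conditional on $\textbf{X}$, so both conditional losses reduce to $\mathbb{E}[|\epsilon_{cau}|\mid \textbf{X}]$ and the claimed equality follows for every $\lambda$. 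This simultaneously explains why Problem~\ref{problem-1} is ill-posed: the whole one-parameter family $\{\hat{\beta}_{spu}:\lambda\in\mathbb{R}\}$ is a set of equivalent optima, setting up the generalization-error discussion in the subsequent results.
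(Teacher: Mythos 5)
Your algebra is the same as the paper's, just run in the opposite direction: the paper starts from $(\textbf{X}\circ\omega_{cau})\beta^*$, inserts $-\lambda\gamma_{spu}+\lambda\gamma_{spu}$, rewrites $(\textbf{X}\circ\omega_{cau})\lambda\gamma_{spu}$ as $\lambda(\textbf{X}_{spu}-\epsilon_{spu})$, and arrives at $\textbf{X}\hat{\beta}_{spu}$, whereas you expand $\textbf{X}\hat{\beta}_{spu}$ and cancel. So the route is essentially identical. The one point of divergence is exactly the ``obstacle'' you flag, and you are right to flag it: the two residuals differ by $\lambda\epsilon_{spu}$, and that term does not disappear for free. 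The paper's own proof disposes of it in a single line by citing $\mathbb{E}_{D_{train}}[\epsilon_{spu}]=0$, but that step is not valid as written: the quantity sits inside $|\cdot|_2$, so a zero-mean perturbation of the residual does not leave the expected loss unchanged, and moreover the expectation is conditional on $\textbf{X}$, under which $\epsilon_{spu}=\textbf{X}_{spu}-(\textbf{X}\circ\omega_{cau})\gamma_{spu}$ is a deterministic function of $\textbf{X}$ rather than a zero-mean noise. Your proposed repair --- reading the lemma in the strict-collinearity regime where $\epsilon_{spu}$ is degenerate (or negligible) on $\mathcal{D}_{train}$, so that both conditional losses reduce to $\mathbb{E}[|\epsilon_{cau}|\mid\textbf{X}]$ --- is essentially the only way to make the stated equality literally hold, and it is consistent with the ``strong relatedness / small variance'' narrative preceding the lemma; alternatively one can state the conclusion with squared loss in unconditional expectation and an orthogonality assumption on $\epsilon_{spu}$, which would make the paper's cancellation legitimate. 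In short: same approach, and the gap you identify is a real gap in the paper's own argument, not a defect of your proposal.
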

The most popular method for solving such ill-posed problem is to add a regularization term for parameters $\beta$ \cite{DBLP:journals/corr/abs-1910-07113}:
\begin{align}
    \min_{\beta} \mathbb{E}_{(\textbf{X},Y)\sim\mathcal{D}_{train}}[\textbf{X}\beta-Y]^2+k\|\beta\|^2,
    \label{problem-2}
\end{align}
where $k$ is a coefficient. 
The form of Problem~\ref{problem-2} corresponds to the form of the ridge regression, which provides a closed-form solution of $k$ by Hoerl-Kennard formula \cite{10.2307/1271436}.

In the following, we will first introduce the solution of $\lambda$ under Problem~\ref{problem-2} in Lemma~\ref{lambda-lemma},
and then introduce the model prediction error bound with $\lambda$ in Theorem~\ref{theorem-spurious}.
For ease of understanding, we provide a simple version where the dimensions of $\textbf{X}_{cau}$ and $\textbf{X}_{spu}$ are both one ($|\textbf{X}_{cau}|=|\textbf{X}_{spu}|=1$).

\begin{lemma}[\textbf{$\lambda$ Lemma}]
\label{lambda-lemma}
Given $\lambda$ as the coefficient in Lemma~\ref{lemma-equal}, and $k$ in Problem~\ref{problem-2} chosen by Hoerl-Kennard formula, we have the solution of $\lambda$ in Problem~\ref{problem-2} that:
\begin{align}
\lambda
&=\frac{ \beta^*\gamma_{spu}}{{\beta^*}^2+ \gamma_{spu}^{2}+1+\frac{\sigma ^{2}_{spu}}{\sigma^2_{cau}}(1+\frac{1}{(\beta^*)^2})}
\label{formula-lambda}
\end{align}
\end{lemma}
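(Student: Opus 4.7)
The plan is to solve Problem~\ref{problem-2} explicitly in the scalar case ($|\textbf{X}_{cau}| = |\textbf{X}_{spu}| = 1$) and to identify the coefficient on $\textbf{X}_{spu}$ in the resulting ridge solution with $\lambda$ via the parametrization of Lemma~\ref{lemma-equal}. Concretely, I would compute the population moments, invert a $2\times 2$ system, substitute the Hoerl--Kennard $k$, and simplify.

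First, I would compute $\mathbb{E}[\textbf{X}\textbf{X}^{T}]$ and $\mathbb{E}[\textbf{X}Y]$. Using the structural equations $Y = \textbf{X}_{cau}\beta^{*} + \epsilon_{cau}$ and $\textbf{X}_{spu} = \textbf{X}_{cau}\gamma_{spu} + \epsilon_{spu}$ together with the stated zero-mean and independence assumptions on $\epsilon_{cau}$ and $\epsilon_{spu}$, I obtain $\mathbb{E}[\textbf{X}\textbf{X}^{T}]$ as the $2\times 2$ matrix with diagonal entries $\sigma_{cau}^{2}$ and $\gamma_{spu}^{2}\sigma_{cau}^{2}+\sigma_{spu}^{2}$ and off-diagonal entry $\gamma_{spu}\sigma_{cau}^{2}$, while $\mathbb{E}[\textbf{X}Y] = (\beta^{*}\sigma_{cau}^{2},\; \beta^{*}\gamma_{spu}\sigma_{cau}^{2})^{T}$.

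Second, I would solve the ridge normal equations $(\mathbb{E}[\textbf{X}\textbf{X}^{T}]+kI)\beta = \mathbb{E}[\textbf{X}Y]$ by explicit $2\times 2$ inversion. Reading off the spurious-variable component $\beta_{s}$, which plays the role of $\lambda$ in Lemma~\ref{lemma-equal}, I arrive at $\lambda = \beta^{*}\gamma_{spu}\sigma_{cau}^{2} k \,/\, [\sigma_{cau}^{2}\sigma_{spu}^{2} + k(\sigma_{cau}^{2}(1+\gamma_{spu}^{2}) + \sigma_{spu}^{2}) + k^{2}]$, a ratio linear in $k$ over a quadratic in $k$. Substituting the Hoerl--Kennard value $k = \sigma_{spu}^{2}/(\beta^{*})^{2}$ and rescaling numerator and denominator by $(\beta^{*})^{2}/(\sigma_{cau}^{2}\sigma_{spu}^{2})$, the numerator collapses to $\beta^{*}\gamma_{spu}$; the constant term of the denominator becomes $(\beta^{*})^{2}$; the two linear-in-$k$ terms contribute $1+\gamma_{spu}^{2}$ and $\sigma_{spu}^{2}/\sigma_{cau}^{2}$; and the $k^{2}$ term contributes $\sigma_{spu}^{2}/(\sigma_{cau}^{2}(\beta^{*})^{2})$. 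Grouping the last two summands yields exactly Formula~\ref{formula-lambda}.

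The main obstacle is conceptual rather than computational. For $k > 0$ the ridge optimum is not literally a member of the one-parameter family $\hat{\beta}_{spu}$ of Lemma~\ref{lemma-equal}; so ``the solution of $\lambda$ in Problem~\ref{problem-2}'' should be read as defining $\lambda$ to be the coefficient placed on $\textbf{X}_{spu}$ by the unique ridge minimizer, matching the role of $\lambda$ in Lemma~\ref{lemma-equal}. Once this identification is fixed, the rest is mechanical linear algebra; the only delicate bookkeeping is tracking which ``noise variance'' enters the Hoerl--Kennard formula --- here $\sigma_{spu}^{2}$ rather than $\mathrm{Var}(\epsilon_{cau})$ --- since that choice is precisely what produces the extra $(\beta^{*})^{2}$ summand in the denominator of Formula~\ref{formula-lambda}.
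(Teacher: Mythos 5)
Your derivation is correct and follows essentially the same route as the paper: the paper likewise plugs the ridge closed form $(\textbf{X}^T\textbf{X}+k\textbf{I})^{-1}\textbf{X}^T\textbf{Y}$ into the population moments, arrives at the identical intermediate expression $\lambda=\sigma^2_{cau}\beta^*\gamma_{spu}k/(\sigma^2_{cau}\sigma^2_{spu}+\sigma^2_{cau}\gamma_{spu}^2k+\sigma^2_{cau}k+\sigma^2_{spu}k+k^2)$, and then substitutes the Hoerl--Kennard value $k=\sigma^2_{spu}/(\beta^*)^2$ and simplifies exactly as you do. Your observation that for $k>0$ the ridge minimizer is not literally a member of the one-parameter family $\hat{\beta}_{spu}$ of Lemma~\ref{lemma-equal} (its causal coefficient is additionally shrunk by $\sigma^2_{cau}/(\sigma^2_{cau}+k)$), so that $\lambda$ must be read as the coefficient the ridge solution assigns to $\textbf{X}_{spu}$, is a legitimate subtlety that the paper's proof passes over with the phrase ``we take $\hat{\beta}_{spu}$ into this solution.''
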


Based on Lemma~\ref{lambda-lemma}, we can find that the smaller  $\sigma^2_{spu}$ (it means that $\textbf{X}_{spu}$ and $\textbf{X}_{cau}$ have stronger relatedness in the training dataset $D_{train}$), the larger $\lambda$. 
And we also have its bound:
\begin{proposition}
Given $\lambda$ as Formula~\ref{formula-lambda}, the bound of $\lambda$ is that
$
-\frac{1}{2}\leq\lambda\leq\frac{1}{2}.
$
\end{proposition}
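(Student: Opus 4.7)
The plan is to reduce the bound to a one-line AM-GM inequality. First I will abbreviate $a := \beta^{*}$ and $b := \gamma_{spu}$ so that the expression for $\lambda$ reads
\begin{equation*}
\lambda \;=\; \frac{ab}{a^{2}+b^{2}+1+\tfrac{\sigma^{2}_{spu}}{\sigma^{2}_{cau}}\bigl(1+\tfrac{1}{a^{2}}\bigr)}.
\end{equation*}
Since $\sigma_{spu}^{2},\sigma_{cau}^{2}\ge 0$ are variances and $1+1/a^{2}>0$, the extra term in the denominator is non-negative, so the denominator is at least $a^{2}+b^{2}+1$. Consequently
\begin{equation*}
|\lambda| \;\le\; \frac{|ab|}{a^{2}+b^{2}+1}.
\end{equation*}

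Next I apply the AM-GM inequality $a^{2}+b^{2}\ge 2|ab|$, which gives $a^{2}+b^{2}+1\ge 2|ab|$. If $ab\ne 0$, dividing yields $|\lambda|\le |ab|/(2|ab|)=\tfrac{1}{2}$; if $ab=0$ the numerator vanishes and $\lambda=0$, which also satisfies the bound. Combining both cases produces $-\tfrac12\le\lambda\le\tfrac12$, as claimed.

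There is really no substantive obstacle here: the statement is essentially an algebraic observation that the denominator dominates twice the numerator. The only subtlety worth noting in a clean write-up is the trivial degenerate case $ab=0$ (needed because the AM-GM step is vacuous but not obstructive there), and the remark that the inequality is actually tight in the limit $\sigma^{2}_{spu}/\sigma^{2}_{cau}\to 0$ with $|a|=|b|=1$, which shows that the constant $\tfrac12$ in Proposition cannot be improved.
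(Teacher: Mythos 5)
Your proof is correct and follows essentially the same route as the paper's: discard the non-negative term $\tfrac{\sigma^2_{spu}}{\sigma^2_{cau}}\bigl(1+\tfrac{1}{(\beta^*)^2}\bigr)$ from the denominator and then apply AM--GM to $a^2+b^2+1$. If anything, your version is slightly more careful than the paper's, since you use $a^2+b^2\ge 2|ab|$ (and treat $ab=0$ separately) where the paper writes $2\beta^*\gamma_{spu}$ without absolute values, which would be problematic for negative $\beta^*\gamma_{spu}$.
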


\begin{theorem}[\textbf{Spurious Theorem}]
\label{theorem-spurious}
Let $\mathcal{D}=\{(\textbf{X},Y)\}$ denote the data distribution, $\hat{\beta}_{spu}$ denote the solution in Lemma~\ref{lemma-equal} with $\lambda$ in Lemma~\ref{lambda-lemma},
and $\hat{Y}_{spu}=\textbf{X}\hat{\beta}_{spu}$ denote the prediction.
Suppose that the data value is bounded: $|X_i|_1 \leq X_{max}, i=1,\cdots,n$ and the error of optimal solution $\epsilon_{cau}$ is also bounded: $|\epsilon_{cau}|_1\leq\epsilon_c$, we have the model prediction error bound:
\begin{align}
\mathbb{E}_{(\textbf{X},Y)\sim D}[(|\hat{Y}_{spu}-Y|_1)\mid \textbf{X}]\leq X_{max}|\lambda|_1(|\gamma_{spu}|_1+1)+\epsilon_{c}.
\end{align}
\end{theorem}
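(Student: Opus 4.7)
The plan is to derive the bound by a direct algebraic substitution followed by the $L_1$ triangle inequality; there is no deep analytic content, only careful bookkeeping of the masks $\omega_{cau},\omega_{spu}$ and the boundedness assumptions.

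First I would plug the closed form $\hat{\beta}_{spu} = \omega_{cau}\circ(\beta^* - \lambda\gamma_{spu}) + \lambda\omega_{spu}$ from Lemma~\ref{lemma-equal} into $\hat{Y}_{spu}=\textbf{X}\hat{\beta}_{spu}$ and expand it alongside the data-generating equation $Y = (\textbf{X}\circ\omega_{cau})\beta^* + \epsilon_{cau}$. The two copies of $(\textbf{X}\circ\omega_{cau})\beta^*$ cancel, leaving the residual
\begin{align}
\hat{Y}_{spu} - Y \;=\; -\lambda\,(\textbf{X}\circ\omega_{cau})\gamma_{spu} \;+\; \lambda\,(\textbf{X}\circ\omega_{spu}) \;-\; \epsilon_{cau},
\end{align}
which cleanly separates the two spurious contributions (one through the spurious-to-causal regression coefficient $\gamma_{spu}$, one through the coefficient $\lambda$ placed on $\textbf{X}_{spu}$ itself) from the irreducible causal noise.

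Next I would apply the triangle inequality coordinate-wise and bound each factor separately. Using $|X_i|_1\le X_{max}$ (so that in the stated one-dimensional simplification both $|\textbf{X}\circ\omega_{cau}|_1$ and $|\textbf{X}\circ\omega_{spu}|_1$ are at most $X_{max}$) together with $|\epsilon_{cau}|_1\le\epsilon_c$, I obtain
\begin{align}
|\hat{Y}_{spu}-Y|_1 \;\le\; X_{max}|\lambda|_1|\gamma_{spu}|_1 \;+\; X_{max}|\lambda|_1 \;+\; \epsilon_c,
\end{align}
and factoring $X_{max}|\lambda|_1$ out of the first two terms gives exactly the bound claimed in Theorem~\ref{theorem-spurious}. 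Since this inequality holds pointwise in $\textbf{X}$ and the only randomness in $Y$ given $\textbf{X}$ is $\epsilon_{cau}$ (which the uniform bound already controls), taking $\mathbb{E}[\,\cdot\mid\textbf{X}\,]$ on both sides leaves the right-hand side unchanged.

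The main thing requiring attention is the dimensional bookkeeping in the first step: $\gamma_{spu}$ is the regression coefficient of $\textbf{X}_{spu}$ on $\textbf{X}_{cau}$, so the contraction $(\textbf{X}\circ\omega_{cau})\gamma_{spu}$ must be interpreted consistently with how $\hat{\beta}_{spu}$ was defined across the $\omega_{cau}$ and $\omega_{spu}$ slots; in the one-dimensional case of the theorem statement this reduces to an unambiguous scalar product. Beyond this indexing care, the result is essentially an immediate corollary of Lemma~\ref{lemma-equal} and the two boundedness assumptions, with no nontrivial obstacle to overcome.
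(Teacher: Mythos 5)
Your proposal is correct and follows essentially the same route as the paper: the paper splits $\hat{Y}_{spu}-Y$ into $(\hat{Y}_{spu}-\hat{Y}_{cau})+(\hat{Y}_{cau}-Y)$ with $\hat{Y}_{cau}=(\textbf{X}\circ\omega_{cau})\beta^*$, which yields exactly your residual $-\lambda(\textbf{X}\circ\omega_{cau})\gamma_{spu}+\lambda(\textbf{X}\circ\omega_{spu})-\epsilon_{cau}$, and then applies the triangle inequality and the bounds $|X_i|_1\leq X_{max}$, $|\epsilon_{cau}|_1\leq\epsilon_c$ just as you do. No meaningful difference in approach.
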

Theorem~\ref{theorem-spurious} shows that 
\begin{itemize}[leftmargin=5mm]
    \item The upper bound of the model prediction error $|\hat{Y}_{spu}-Y|_1$ increases by $X_{max}|\lambda|_1(|\gamma_{spu}|_1+1)$ for each induced spurious variable $X_{spu}$ in the model.
    \item When $X_{spu}$ and $X_{cau}$ have stronger relatedness (which means a bigger $\lambda$), the increment of the prediction model error bound led by $X_{spu}$ is bigger.
\end{itemize}

\subsection{Policy Evaluation Error Bound}

Although in most cases, an accurate model ensures a good performance in MBRL, the model error bound is still an indirect evaluation compared to the policy evaluation error bound for MBRL.
In this subsection, we apply the spurious theorem (Theorem~\ref{theorem-spurious}) to offline MBRL and provide the policy evaluation error bound with the number of spurious variables.

Suppose that the state value and reward are bounded that $|S_{t,i}|_1\leq S_{max}, R_t\leq R_{max}$, let $\lambda_{max}$ denote the maximum of $\lambda$ and $\gamma_{max}$ denote the maximum of $|\gamma_{spu}|_1$, we have the policy evaluation error bound in Theorem~\ref{theorem-spurious-rl}.
\begin{theorem}[\textbf{RL Spurious Theorem}]
Given an MDP with the state dimension $n_s$ and the action dimension $n_a$, 
a data-collecting policy $\pi_D$,
let $M^*$ denote the true transition model, 
$M_\theta$ denote the learned model that $M_\theta^i$ predicts the $i^{th}$ dimension with spurious variable sets $spu_i$ and causal variables $cau_i$, i.e.,  $\hat{S}_{t+1,i}=M^i_{\theta}((\textbf{S}_t,\textbf{A}_t)\circ \omega_{cau_i\cup spu_i})$.
Let $V_{\pi}^{M_\theta}$ denote the policy value of the policy $\pi$ in model $M_{\theta}$ and correspondingly $V_{\pi}^{M^*}$.
For an arbitrary bounded divergence policy $\pi$, i.e. $\max_{S} D_{KL}(\pi(\cdot|S),\pi_{D}(\cdot|S))\leq \epsilon_{\pi}$, we have the policy evaluation error bound:
\begin{align}
|V_{\pi}^{M_\theta}-V_{\pi}^{M^*}|\leq 
\frac{2\sqrt{2}R_{max}}{(1-\gamma)^2}\sqrt{\epsilon_{\pi}}+ \nonumber
\frac{R_{max}\gamma}{2(1-\gamma)^2}S_{max}[n_s\epsilon_c+\nonumber 
(1+\gamma_{max})\lambda_{max}n_s(n_s+n_a)R_{spu}]
\end{align}
where $R_{spu}=\frac{\sum_{i=1}^{n_s}|spu_i|}{n_s(n_s+n_a)}$, which represents the spurious variable density, that is, the ratio of spurious variables in all input variables .
\label{theorem-spurious-rl}
\end{theorem}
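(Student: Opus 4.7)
The plan is to split the quantity of interest by inserting the data-collecting policy $\pi_D$ as an intermediate:
\begin{align*}
|V_{\pi}^{M_\theta}-V_{\pi}^{M^*}|
\leq \underbrace{|V_{\pi}^{M_\theta}-V_{\pi_D}^{M_\theta}|}_{\text{policy shift under }M_\theta}
+ \underbrace{|V_{\pi_D}^{M_\theta}-V_{\pi_D}^{M^*}|}_{\text{model shift under }\pi_D}
+ \underbrace{|V_{\pi_D}^{M^*}-V_{\pi}^{M^*}|}_{\text{policy shift under }M^*}.
\end{align*}
The two outer terms depend only on how far $\pi$ is from $\pi_D$ in KL and will produce the $\sqrt{\epsilon_\pi}$ contribution; the middle term is evaluated entirely under the data-generating policy, which is the distribution that $\mathcal{D}_{train}$ reflects and therefore the one on which Theorem~\ref{theorem-spurious} is directly applicable.

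\textbf{The policy-shift terms.} For either fixed model $M\in\{M^*,M_\theta\}$, I would invoke a Kakade--Langford-style performance-difference bound to obtain $|V_\pi^M-V_{\pi_D}^M|\leq \frac{2R_{max}}{(1-\gamma)^2}\max_s TV\bigl(\pi(\cdot|s),\pi_D(\cdot|s)\bigr)$, and then convert the KL hypothesis $\max_s D_{KL}(\pi\|\pi_D)\leq \epsilon_\pi$ into a TV bound using Pinsker's inequality $TV\leq\sqrt{D_{KL}/2}$. Each of the two outer terms is therefore at most $\frac{\sqrt{2}R_{max}}{(1-\gamma)^2}\sqrt{\epsilon_\pi}$, and the sum of the two reproduces the first summand $\frac{2\sqrt{2}R_{max}}{(1-\gamma)^2}\sqrt{\epsilon_\pi}$ stated in the theorem.

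\textbf{The model-shift term.} For the middle term I would apply the simulation lemma with both rollouts driven by $\pi_D$:
\begin{align*}
|V_{\pi_D}^{M_\theta}-V_{\pi_D}^{M^*}|
\leq \frac{\gamma R_{max}}{2(1-\gamma)^2}\,\mathbb{E}_{(s,a)\sim d_{\pi_D}}\bigl[\|M_\theta(\cdot|s,a)-M^*(\cdot|s,a)\|_1\bigr],
\end{align*}
where the factor $\tfrac12$ comes from converting TV into $L_1$ distance. Since $M_\theta$ factorises as $\hat S_{t+1,i}=M_\theta^i((S_t,A_t)\circ\omega_{cau_i\cup spu_i})$, the $L_1$ distance decomposes coordinatewise into the sum of $n_s$ scalar prediction errors, and Theorem~\ref{theorem-spurious} applies to each coordinate $i$: every spurious variable in $spu_i$ inflates that coordinate's error by at most $S_{max}\lambda_{max}(|\gamma_{spu}|_1+1)\leq S_{max}\lambda_{max}(1+\gamma_{max})$, and the residual $\epsilon_c$ remains. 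Summing over the $n_s$ coordinates gives $\sum_i|spu_i|\cdot S_{max}\lambda_{max}(1+\gamma_{max})+n_s\epsilon_c$, and using the identity $\sum_{i=1}^{n_s}|spu_i|=n_s(n_s+n_a)R_{spu}$ together with the prefactor reproduces the second summand exactly.

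\textbf{Main obstacle.} The genuinely delicate step is the first decomposition: Theorem~\ref{theorem-spurious} provides a bound only under the training distribution, so the triangle-inequality detour through $V^{M_\theta}_{\pi_D}$ and $V^{M^*}_{\pi_D}$ is what lets the model-error expectation be taken against $d_{\pi_D}$ rather than against the unknown $d_\pi^{M^*}$ or $d_\pi^{M_\theta}$; this is also why the $\sqrt{\epsilon_\pi}$ term appears twice and yields the constant $2\sqrt{2}$. A secondary bookkeeping point is that Theorem~\ref{theorem-spurious} is stated for a single scalar spurious variable, so it must be instantiated once per pair $(i,j)$ with $j\in spu_i$ and uniform upper bounds $\lambda_{max},\gamma_{max}$ taken; counting such pairs is precisely what produces the density factor $n_s(n_s+n_a)R_{spu}$. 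Everything else is standard telescoping.
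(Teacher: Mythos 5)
Your proposal follows essentially the same route as the paper's proof: the same triangle-inequality detour through $\pi_D$ yielding the $\frac{2\sqrt{2}R_{max}}{(1-\gamma)^2}\sqrt{\epsilon_\pi}$ term, the same simulation-lemma treatment of $|V_{\pi_D}^{M_\theta}-V_{\pi_D}^{M^*}|$ via occupancy-measure differences, and the same coordinatewise application of Theorem~\ref{theorem-spurious} with uniform bounds $\lambda_{max},\gamma_{max}$ and the count $\sum_i|spu_i|=n_s(n_s+n_a)R_{spu}$. The only cosmetic difference is bookkeeping of whether $S_{max}$ multiplies the $n_s\epsilon_c$ term (the paper's final bound places it inside the bracket); otherwise the argument is the paper's.
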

Theorem~\ref{theorem-spurious-rl} shows the relation between the policy evaluation error bound and the spurious variable density, which indicates that:
\begin{itemize}[leftmargin=5mm]
    \item When we use a non-causal model that all the spurious variables are input, $R_{spu}$ reaches its maximum value $\bar{R}_{spu}< 1$. By contrast, in the optimal causal structure, $R_{spu}$ reaches its minimum value of $0$.
    \item The density of spurious variables $R_{spu}$ and the correlation strength of spurious variables $\lambda_{max}$ both influence the policy evaluation error bound.
    However, if we exclude all the spurious variables, i.e., $R_{spu}=0$, the correlation strength of spurious variables will have no effect.
\end{itemize}

\section{Algorithm}

\begin{figure}[t!]
\begin{center}
\centerline{\includegraphics[width=0.95\linewidth]{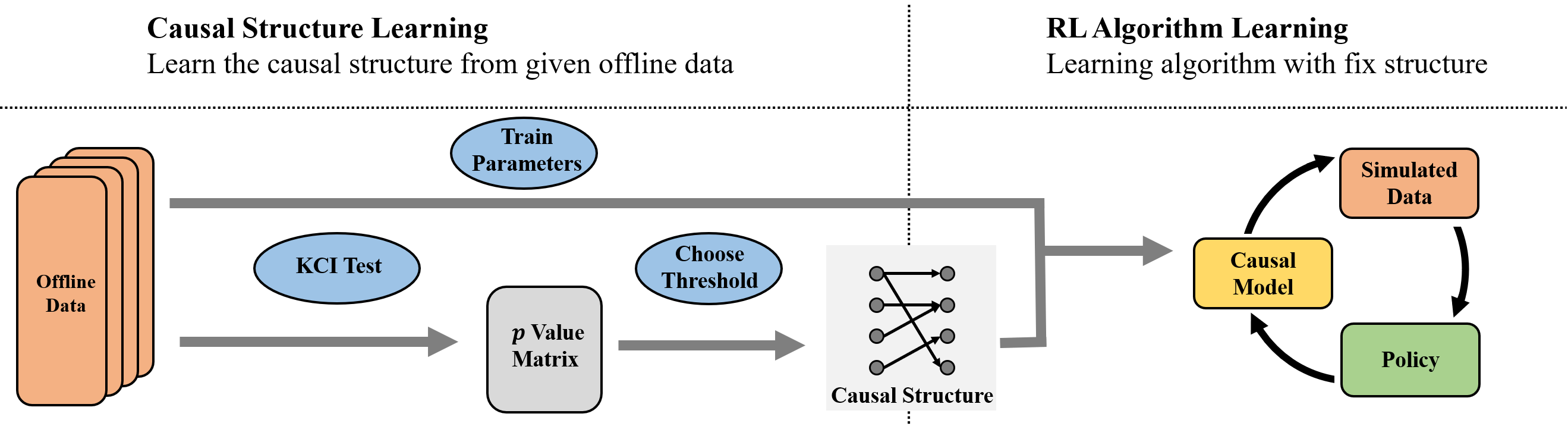}}
\caption{The architecture of FOCUS. Given offline data, FOCUS learns a $p$ value matrix by KCI test and then gets the causal structure by choosing a $p$ threshold. After combining the learned causal structure with the neural network, FOCUS learns the policy through an offline MBRL algorithm.}
\vskip -0.2in
\label{fig-architecture}
\end{center}
\vskip -0.1in
\end{figure}

In the theory section, we have provided the theoretical evidence about the advantages of a causal world-model over a plain world-model.
Besides lower prediction errors, a causal world-model also matters for better decision-making in RL.
In the condition that spurious variables do not increase prediction errors (e.g., spurious variables disturb only in unreachable states), a wrong causal relation also leads to terrible decision-making.
For example, rooster crowing can predict the rise of the sun, but forcing a rooster to crow for a sunny day is a natural decision if we have a wrong causal relation that rooster crowing causes the rise of the sun.
In the above example, predicting the rise of the sun by rooster crowing is a zero-error world-model since rooster crowing on a rainy day is an unreachable state, but such world-model leads to terrible decision-making.

After demonstrating the necessity of a causal world-model in offline RL, in this section we propose a practical offline MBRL algorithm, FOCUS, to illustrate the feasibility of learning causal structure in offline RL.
The main idea of FOCUS is to take the advantage of causal discovery methods and extend it to offline MBRL.
Compared to previous online causal structure learning methods, the causal discovery method brings the following advantages in the offline setting:
\begin{itemize}[leftmargin=5mm]
    \item \textbf{Robust:} The structure is not influenced by the performance of the test data, which is artificially selected and biased.
    \item \textbf{Efficient:} The causal discovery method directly returns the structure by independence testing without any network training procedure and thus saves the samples for network training.
\end{itemize}

\subsection{Preliminary}

\begin{wrapfigure}{r}{0.5\columnwidth}
\begin{center}
\vskip -0.2in
\begin{minipage}[b]{0.15\columnwidth}
\centerline{\includegraphics[width=0.9\columnwidth]{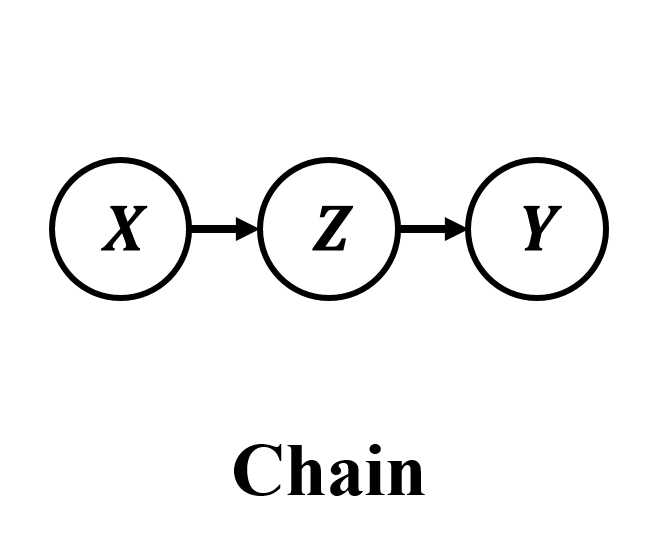}}
\end{minipage}
\begin{minipage}[b]{0.15\columnwidth}
\centerline{\includegraphics[width=0.9\columnwidth]{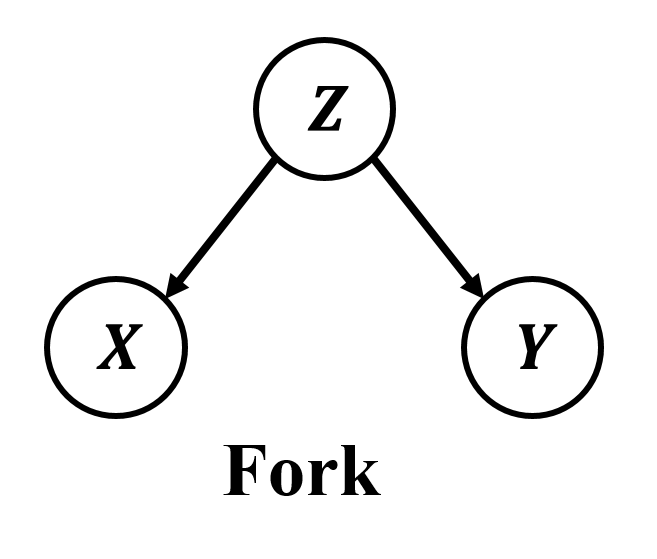}}
\end{minipage}
\begin{minipage}[b]{0.15\columnwidth}
\centerline{\includegraphics[width=0.9\columnwidth]{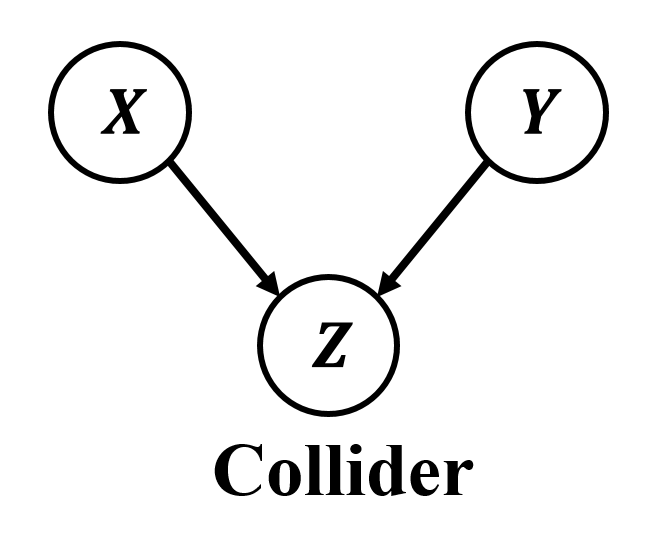}}
\end{minipage}
\caption{The three basic structure for three variables $X, Y$ and $Z$, where $Z$ plays an important role in causal discovery.
}
\label{fig-chain}
\end{center}
\vskip -0.2in
\end{wrapfigure}
\textbf{Conditional Independence Test.}
Independence and conditional independence (CI) play a central role in causal discovery \cite{DBLP:journals/ijon/Shanmugam01,DBLP:journals/technometrics/Burr03,DBLP:journals/ker/Parsons11a}. 
Generally speaking, the CI relationship $X\upmodels Y\mid Z$ allows us to drop $Y$ when constructing a probabilistic model for $X$ with $(Y, Z)$.
There are multiple CI testing methods for various conditions, which provide the correct conclusion only given the corresponding condition.
The kernel-based Conditional Independence test (KCI-test) \cite{kci} is proposed for continuous variables without assuming a functional form between the variables as well as the data distributions, which is the case we are concerned with in this paper.
Generally, the hypothesis $H_0$ that variables are conditionally independent is rejected when $p$ is smaller than the pre-assigned significance level, say, 0,05. In practice, we can design the significance level instead of a fixed value.

\textbf{Conditional Variables.}
Besides the specific CI test method, the conclusion of conditional independence testing also depends on the conditional variable $Z$, that is, different conditional variables can lead to different conclusions.
Taking the triple $(X,Y,Z)$ as an example, there are three typical structures, namely, \textit{Chain}, \textit{Fork}, and \textit{Collider} as shown in Fig~\ref{fig-chain}, where whether conditioning on $Z$ significantly influences the testing conclusion.

\begin{itemize}[leftmargin=5mm]
    \item \textit{Chain:} There exists causation between $X$ and $Y$ but conditioning on $Z$ leads to independence.
    \item \textit{Fork:} There does not exist causation between $X$ and $Y$ but not conditioning on $Z$ leads to non-independence.
    \item \textit{Collider:} There does not exist causation between $X$ and $Y$ but conditioning on $Z$ leads to non-independence.
\end{itemize}

\subsection{Building the Causal Structure from the KCI test}

\textbf{Applying the Independence Test in RL.}
Based on the preliminaries, given the two target variables $X, Y$ and the condition variable $Z$, the KCI test returns a probability value $p=f_{KCI}(X, Y, Z)\in[0,1]$, which measures the probability that $X$ and $Y$ are conditionally independent given the condition $Z$.
In other words, a small $p$ implies that $X$ and $Y$ have causation given $Z$.
To transform an implicit probability value into an explicit conclusion of whether the causation exists, we design a threshold $p^*$ that:
\begin{align*}
    Causation(X,Y)=\mathbb{I}(f_{KCI}(X,Y,Z)\leq p^*)\in\{0,1\},
\end{align*}
where $Causation(X,Y)=1$ represents independence and $0$ represents that causation exists. 
Details of choosing $p^*$ can be found in Appendix~\ref{app:alg-1}.

In model learning of RL, variables are composed of states and actions of the current and next timesteps and the causal structure refers to whether a variable in $t$ timestep (e.g., the $i^{th}$ dimension, $X_t^i$) causes another variable in $t+1$ timestep (e.g., the $j^{th}$ dimension, $X_{t+1}^j$).
With the KCI test, we get the causal relation through the function $Causation(\cdot,\cdot)$ for each variable pair $(X_t^i,X_{t+1}^j)$ and then form the causal structure matrix $\mathcal{G}$:
\begin{align*}
    \mathcal{G}_{i,j}=Causation(X_t^i,X_{t+1}^j),
\end{align*}
where $\mathcal{G}_{i,j}$ is the element in row $i$ and column $j$ of $\mathcal{G}$.

\begin{figure*}
\begin{center}
\centerline{\includegraphics[width=0.6\columnwidth]{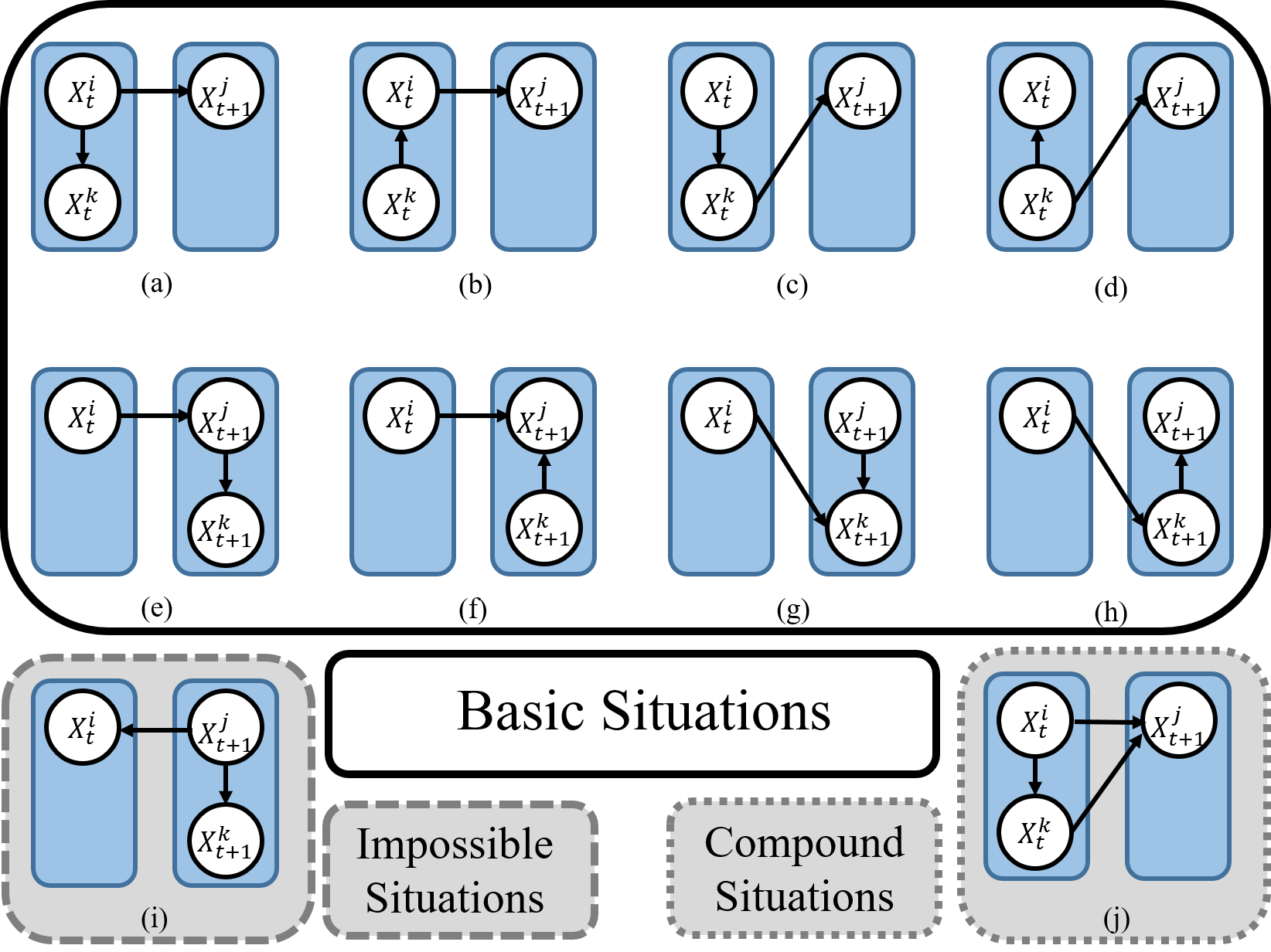}}
\caption{The basic, impossible and compound situations of the causation between target variables and condition variables.
In the basic situations,
\textbf{Top Line: }(a)-(d) list the situations that the condition variable $X^k$ is in the $t$ time step. 
\textbf{Bottom Line: }Similarly, (e)-(h) list the situations that the condition variable $X^k$ is in the $t+1$ time step.
}
\label{fig-causal-XYZ-new}
\end{center}
\end{figure*}

\textbf{Choosing the Conditional Variable in RL.}
As we said in preliminaries, improper conditional variables can reverse the conclusion of independence testing.
Therefore we have to carefully design the conditional variable set, which should include the intermediate variable of \textit{Chain}, the common parent variable of \textit{Fork}, but not the common son variable of \textit{Collider}.
Traditionally, the independence test has to traverse all possibilities of the conditional variable set and gives the conclusion, which is too time-consuming.
However, in RL we can reduce the number of conditional independence tests by incorporating the constraint that the future cannot cause the past.
Actually, this constraint limits the number of possible conditional variable set to a small value.
Therefore we can even take a classified discussion for each possible conditional variable set.
Before the discussion, we exclude two kinds of situations for simplicity:
\begin{itemize}[leftmargin=5mm]
    \item \textbf{Impossible situations.} We exclude some impossible situations as Fig~\ref{fig-causal-XYZ-new} (i) by the temporal property of data in RL.
    Specifically, the direction of the causation cannot be from the variable of $t+1$ time step to that of $t$ time step because the effect cannot happen before the cause.
    \item \textbf{Compound situations.} We only discuss the basic situations and exclude the compound situations, e.g., Fig~\ref{fig-causal-XYZ-new} (j), which is a compound of (a) and (c).
    It is because in such compound situations, the target variables $X_t^i$ and $X_{t+1}^j$ have direct causation (or it can not be a compound situation) and the independence testing only misjudges independence as non-independence but not non-independence as independence.
\end{itemize}

We list all possible situations of target variables $X_t^i,X_{t+1}^j$ and condition variable $X_{t/t+1}^k$ in the world-model as shown in Fig~\ref{fig-causal-XYZ-new}.
Based on the causal discovery knowledge in the preliminaries, we analyze basic situations in the following:
\begin{itemize}[leftmargin=5mm]
    \item \textbf{Top Line:} In (a)(b), whether conditioning on $X_t^k$ does not influence the conclusion of causation;
    In (c), although $X_t^k$ plays an intermediate variable in a \textit{Chain} and conditioning on $X_t^k$ leads to the conclusion of independence of $X_t^i$ and $X_{t+1}^j$, the causal parent set of $X_{t+1}^j$ will include $X_t^k$ when testing the causal relation between $X_t^k$ and $X_{t+1}j$, which can offset the influence of excluding $X_t^i$.
    In (d), conditioning on $Z$ is necessary for getting the correct conclusion of causation since $X_t^k$ is the common causal parent in a $\textit{Fork}$ structure. 
    \item \textbf{Bottom Line:} In (e)(f), whether conditioning on $X_{t+1}^k$ does not influence the conclusion of causation;
    In (g), not conditioning on $X_{t+1}^k$ is necessary to get the correct conclusion of causation since $X_{t+1}^k$ is the common son in a $\textit{Collider}$ structure;
    In (h), although $X_{t+1}^k$ plays an intermediate variable in a \textit{Chain} and not conditioning on $X_{t+1}^k$ leads to the conclusion of non-independence of $X_t^i$ and $X_{t+1}^j$, including $X_t^i$ in the causal parent set of $X_{t+1}^j$ will not induce any problem since $X_t^i$ does indirectly cause $X_{t+1}^j$.
\end{itemize}
Based on the above classified discussion, we can conclude our principle for choosing conditional variables in RL that: (1) Condition on the other variables in $t$ time step; (2) Do not condition on the other variables in $t+1$ time step.

\subsection{Combining Learned Causal Structure with An Offline MBRL Algorithm}

We combine the learned causal structure with an offline MBRL algorithm, MOPO \cite{mopo}, to form a causal offline MBRL algorithm as in Fig~\ref{fig-architecture}.
The complete learning procedure is shown in Algorithm~\ref{alg-model}, where Algorithm~\ref{alg-network} can be found in Appendix~\ref{app:alg-2}. 
Notice that our causal model learning method could be combined with any offline MBRL algorithm in principle.
More implementation details and hyperparameter values are summarized in Appendix~\ref{app:alg-1}.

\begin{algorithm}[t!]
  \caption{Causal Model Framework for Offline MBRL}
  \label{alg-model}
\begin{algorithmic}
  \STATE {\bfseries Input:} offline data set $\mathcal{D}=\{(\textbf{s}_t,\textbf{a}_t,\textbf{s}_{t+1},r_t)\}$;
  model $\mathcal{M}(\cdot;\theta)$;
  
  \STATE {\bf Stage 1:} Causal Structure Learning
  \STATE Get $p$ value matrix $G_{p}$ by KCI testing.
  \STATE Get the threshold $p^*$ by $G_{p}$.
  \STATE Get causal structure mask matrix $G$ by the threshold $p^*$.
  \STATE {\bf Stage 2:} Offline Reinforcement Learning
  \STATE {Choose an offline model-based reinforcement learning algorithm {\bfseries Algo($\cdot$)} and replace its model $\mathcal{M}(\cdot)$ by $\mathcal{M}_{Causal}(\cdot,G,\mathcal{M})$ (Algorithm~\ref{alg-network} in Appendix).}
  \STATE {Obtain the optimal policy $\pi^*=\textbf{Algo}(\mathcal{D})$.}
  \STATE {\bfseries Return} $\pi^*$
\end{algorithmic}
\end{algorithm}

\section{Experiments}

To demonstrate that 
(1) Learning a causal world-model is feasible in offline RL 
and (2) a causal world-model can outperform a plain world-model and other related online methods in offline RL, we evaluate (1) \textbf{causal structure learning} and (2) \textbf{policy learning} on the Toy Car Driving and MuJoCo benchmark.
Toy Car Driving is a simple and typical environment that is convenient to evaluate the accuracy of learned causal structure because We can design the causation between variables in it.
The MuJoCo is the most common benchmark to investigate the performance in continuous controlling, where each dimension of the state has a specific meaning and is highly abstract.
We evaluate FOCUS on the following indexes: 
(1) The \textit{accuracy}, \textit{efficiency} and \textit{robustness} of causal structure learning.
(2) The \textit{policy return} and \textit{generalization ability} in offline MBRL.

\textbf{Baselines.} 
We compare FOCUS with the sota offline MBRL algorithm, MOPO, and other online RL algorithms that also learn causal structure in two aspects, causal structure learning and policy learning.
(1) MOPO \cite{mopo} is a well-known and widely-used offline MBRL algorithm, which outperforms standard model-based RL algorithms and prior sota model-free offline RL algorithms on existing offline RL benchmarks.
The main idea of MOPO is to artificially penalize rewards by the uncertainty of the dynamics, which can avoid the distributional shift issue.
MOPO can be seen as the blank control with a plain world-model.
(2) Learning Neural Causal Models from Unknown Interventions (LNCM) \cite{DBLP:journals/corr/abs-1910-01075} is an online MBRL, in which the causal structure learning method can be transformed to the offline setting with a simple adjustment.
We take LNCM as an example to show that an online method cannot be directly transferred into offline RL.

\textbf{Environment.} 
\textbf{Toy Car Driving.}
Toy Car driving is a typical RL environment where the agent can control its direction and velocity to finish various tasks including avoiding obstacles and navigating. 
The information of the car, e.g., position, velocity, direction, and acceleration, can form the state and action in an MDP.
In this paper, we use a 2D Toy Car driving as the RL environment where the task of the car is to arrive at the destination (The visualization of states and a detailed description can be found in Appendix~\ref{app:exp-1}).
The state includes the direction $d$, the velocity (scalar) $v$, the velocity on the $x$-axis (one dimensional vector) $v_x$, the velocity on the $y$-axis $v_y$ and the position $(p_x,p_y)$.
The action is the steering angle $a$.
The visualization of the causal graph can be found in Appendix~\ref{app:exp-1}.
This causal structure is designed to demonstrate how a variable become spurious for others and highlight their influence in model learning.
\begin{figure*}
\begin{center}
\centerline{\includegraphics[width=0.3\columnwidth]{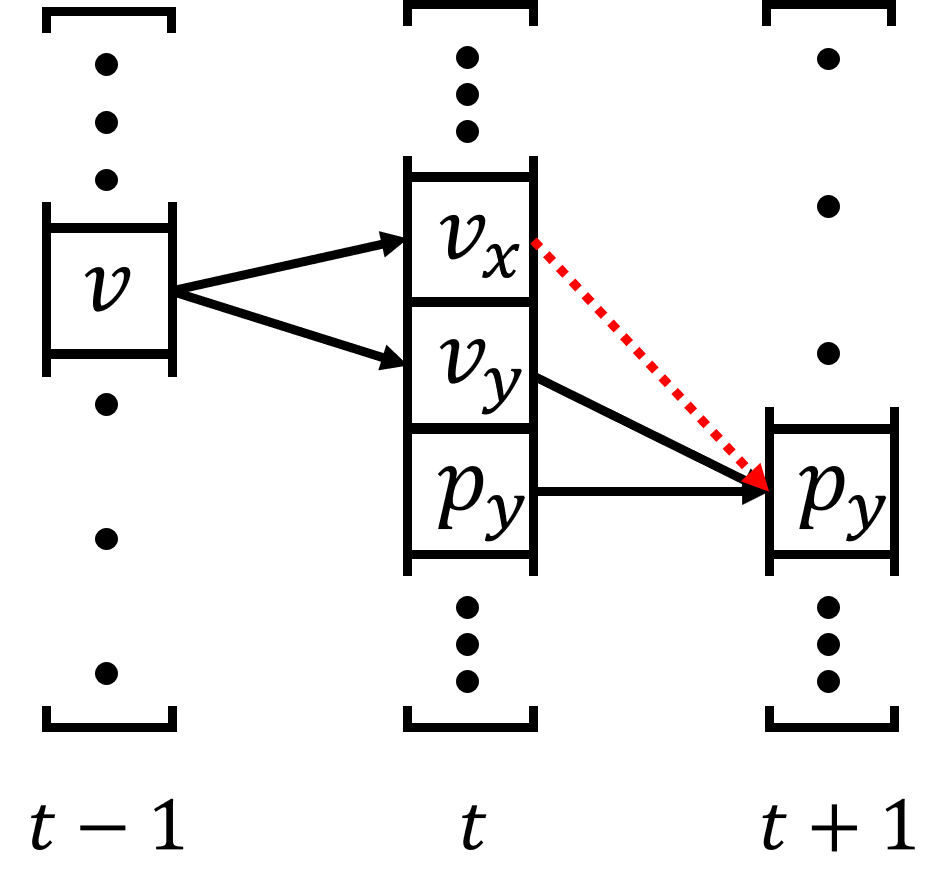}}
\caption{The visualization of the example.
The red dotted arrow presents that $(v_x)_t$ is a spurious variable for $(p_y)_{t+1}$.
}
\label{fig-spurious-example}
\end{center}
\end{figure*}

For example, when the velocity $v_{t-1}$ maintains stationary due to an imperfect sample policy, $(v_x)_t$ and $(v_y)_t$ have strong relatedness that $(v_x)_t^2+(v_y)_t^2=v_{t-1}^2$ and one can represent the other.
Since we design that $(p_y)_{t+1}-(p_y)_{t}=(v_y)_t$, $(v_x)_t$ and $(p_y)_{t+1}-(p_y)_{t}$ also have strong relatedness, which leads to that $(v_x)_t$ becomes a spurious variable of $(p_y)_{t+1}$ given $(p_y)_{t}$, despite that $(v_x)_t$ is not the causal parent of $y_{t+1}$.
By contrast, when the data is uniformly sampled with various velocities, this spuriousness will not exist.
\textbf{MuJoCo.}
MuJoCo \cite{mujoco} is a general-purpose physics engine, which is also a well-known RL environment.
MuJoCo includes multijoint dynamics with contact, where the variables of the state represent the positions, angles, and velocity of the agent. 
The dimensions of the state are from 3 to dozens.
The limited dimensions and the clear meaning of each variable provide the convenience of causal structure learning.

\textbf{Offline Data.}
We prepare three offline data sets, \textit{Random}, \textit{Medium}, and \textit{Replay} for the Car Driving and MuJoCo. 
\textit{Random} represents that data is collected by random policies.
\textit{Medium} represents that data is collected by a fixed but not well-trained policy, which is the least diverse.
\textit{Medium-Replay} is a collection of data that is sampled during training the \textit{Medium} policy, which is the most diverse.
The heat map of the data diversity is shown in Appendix~\ref{app:exp-1}.

\vskip -0.1in
\begin{table*}[ht]
\caption{The results on causal structure learning of our model and the baselines. Both the accuracy and the variance are calculated by five times experiments. \textit{FOCUS (-KCI)} represents FOCUS with a linear independence test. \textit{FOCUS (-CONDITION)} represents FOCUS with choosing all other variables as conditional variables.}
\label{table-cd}
\vskip -0.2in
\begin{center}
\begin{small}
\begin{sc}
\begin{tabular}{lcccc}
\toprule
Index         & FOCUS & LNCM & FOCUS(-kci)& FOCUS(-condition)  \\
\midrule
Accuracy      & \textbf{0.993}           & 0.52 & 0.62 & 0.65            \\ 
Robustness    & \textbf{0.001}           & 0.025  & 0.173 & 0.212              \\ 
Efficiency(Samples)    & $\textbf{1}\times\textbf{10}^\textbf{6}$           & $1\times10^7$ & $1\times10^6$ & $1\times10^6$                 \\ 
\bottomrule
\end{tabular}
\end{sc}
\end{small}
\end{center}
\vskip -0.1in
\end{table*}

\subsection{Causal Structure Learning} 

We compare FOCUS with baselines on the causal structure learning with the indexes of the \textit{accuracy}, \textit{efficiency}, and \textit{robustness}.
The accuracy is evaluated by viewing the structure learning as a classification problem, where causation represents the positive example and independence represents the negative example. 
The efficiency is evaluated by measuring the samples for getting a stable structure.
The robustness is evaluated by calculating the variance in multiple experiments.
The results in Table~\ref{table-cd} show that FOCUS surpasses LNCM on accuracy, robustness, and efficiency in causal structure learning. 
Noticed that LNCM also has a low variance because it predicts the probability of existing causation between any variable pairs with around $50\%$, which means that its robustness is meaningless.

\begin{table}[t!]
\caption{The comparison on converged policy return in the two benchmarks.
The detailed training curves are in Appendix~\ref{app:exp-1}.}
\vskip -0.1in
\label{table-reb-1}
\begin{center}
\begin{small}
\begin{sc}
\begin{tabular}{l|ccc|ccc}
\toprule
\multicolumn{1}{c|}{Env} & \multicolumn{3}{c|}{Car Driving} & \multicolumn{3}{c}{MuJoCo(Inverted Pendulum)} \\  
\midrule
                & \multicolumn{1}{c}{Random}   & \multicolumn{1}{c}{Medium}   & \multicolumn{1}{c|}{Replay}  & \multicolumn{1}{c}{Random}        & \multicolumn{1}{c}{Medium}      & \multicolumn{1}{c}{Replay}        \\  
\midrule
FOCUS             & \bm{$68.1 \pm 20.9$}    & $-58.9 \pm 41.3$         & \bm{$86.2\pm 18.2$} & \bm{$23.5\pm 17.9$}      & \bm{$24.9\pm 14.1$}  &  \bm{$49.2\pm 19.0$}          \\
\midrule
MOPO              & $-30.3 \pm 49.9$        & $-50.1 \pm 34.2$         & $46.2 \pm 28.1$     &      $8.5 \pm 6.2$  & $2.5\pm 0.08$   & $43.4 \pm 7.7$ \\
\midrule
LNCM              & $9.9 \pm 42.5$          &\bm{$-5.4 \pm 32.5$}      & $11.4 \pm 24.0$       &    $13.3 \pm 0.9$         &    $3.1 \pm 0.7$          &  $16.3 \pm 6.4$        \\
\bottomrule    
\end{tabular}
\end{sc}
\end{small}
\end{center}
\vskip -0.2in
\end{table}

\subsection{Policy Learning}

\textbf{Policy Return.}
We evaluate the performance of FOCUS and baselines in the two benchmarks on three different and typical offline data sets.
The results in Table~\ref{table-reb-1} show that FOCUS outperforms baselines by a significant margin in most data sets.
In \textit{Random}, FOCUS has the most significant performance gains to the baselines in both benchmarks because of the accuracy of causal structure learning in FOCUS. 
By contrast, in \textit{Medium-Replay}, the performance gains of FOCUS are least since the high data diversity in \textit{Medium-Replay} leads to weak relatedness of spurious variables (corresponds to small $\lambda$), which verifies our theory.
In \textit{Medium}, the results in the two benchmarks are different. In Car Driving, the relatively high score of LNCM does not mean that LNCM is the best but all three fail. 
The failure indicates that extremely biased data makes even the causal model fail to generalize.
However, the success of FOCUS in the Inverted Pendulum indicates that causal world-models depend less on the data diversity since FOCUS can still reach high scores in such a biased dataset where the baselines fail.
Here we only provide the results in \textit{Inverted Pendulum} but not all the environments in MuJoCo due to the characteristics of the robot control, specifically the frequency of observations, which we present a detailed description in Appendix~\ref{app:exp-1}.

\begin{figure*}
\begin{center}
\centerline{\includegraphics[width=0.4\columnwidth]{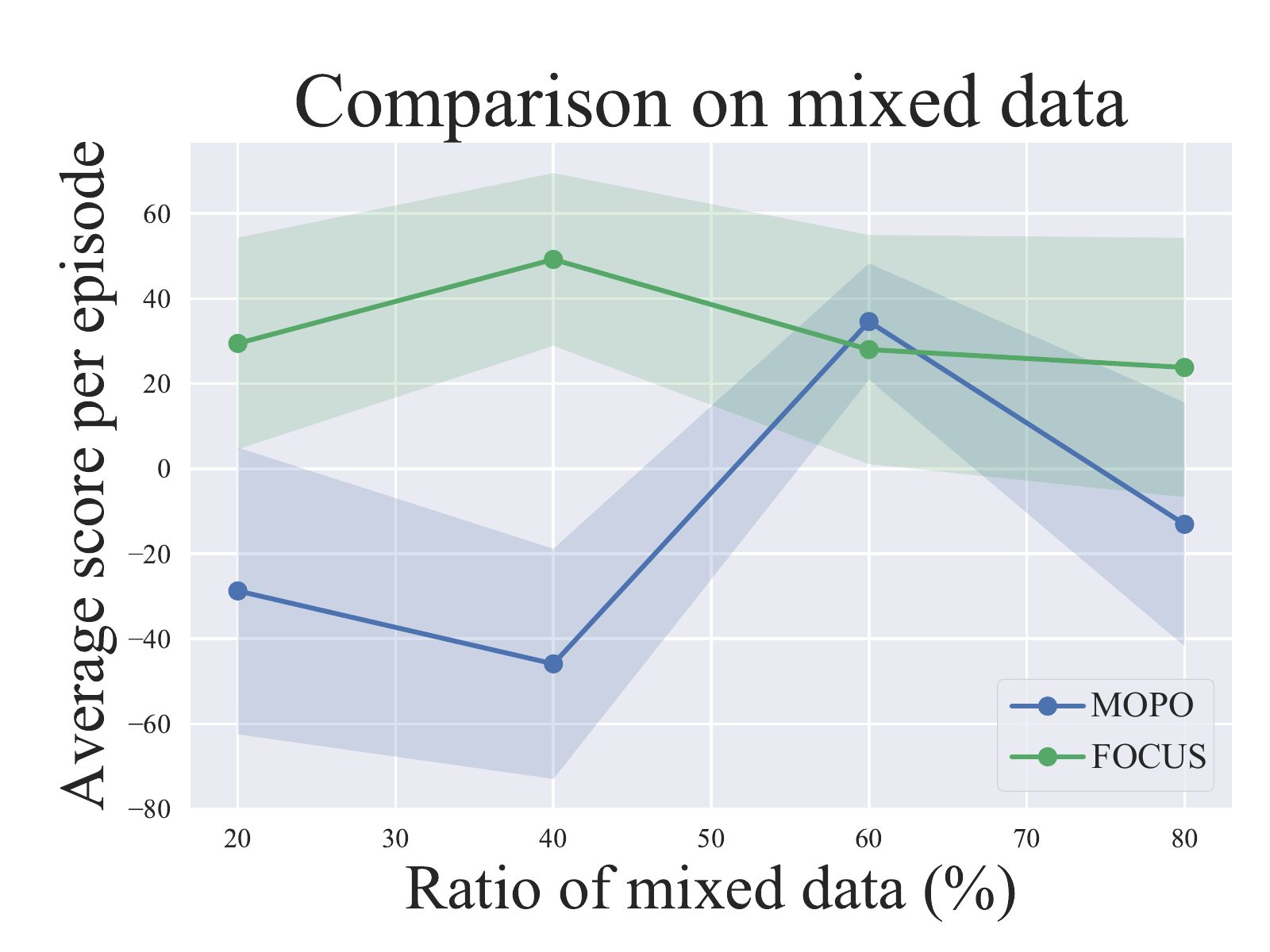}}
\caption{The comparison on generalization in mixed data.}
\label{fig-mix-data}
\end{center}
\end{figure*}
\textbf{Generalization Ability.} We compare the performance on different offline data sets, which is produced by mixing up \textit{Medium-Replay} and \textit{Medium} with different ratios.  
The $X\%$ in the x-axis represents that the data is mixed by $100\%$ of the \textit{Medium} and $X\%$ of the \textit{Medium-Replay}.
The results in Fig~\ref{fig-mix-data} show that FOCUS still performs well with a small ratio of \textit{Medium-Replay} data while the baseline performs well only with a big ratio, which indicates that FOCUS is less dependent on the diversity of data.

\subsection{Ablation Study}
To evaluate the contribution of each component, we perform an ablation study for FOCUS. 
The results in Table~\ref{table-cd} show that the KCI test and our principle of choosing conditional variables contribute to the causal structure learning of both accuracy and robustness.

\section{Conclusion}
In this paper, we provide theoretical evidence about the advantages of using a causal world-model in offline RL.
We present error bounds of model prediction and policy evaluation in offline MBRL with causal and plain world-models.
We also propose a practical algorithm, FOCUS, to address the problem of learning causal structure in offline RL.
The main idea of FOCUS is to utilize causal discovery methods for offline causal structure learning.
We design a general mechanism to solve problems in extending causal discovery methods in RL, which includes conditional variables choosing.
Extensive experiments on the typical benchmark demonstrate that FOCUS achieves accurate and robust causal structure learning and thus significantly surpasses baselines in offline RL.

The limitation of FOCUS lies in: In our theory, we assume that the true causal structure is given. 
However, in practice, one needs to learn it from data and then use it. 
Quantifying the uncertainty in the learned causal structure from data is known to be a hard problem, and as one line of our future research, we will derive the generalization error bound with the causal structure learned from data.

\bibliography{neurips_2022}

\bibliographystyle{unsrt}

\newpage
\clearpage
\par
\newpage

\section*{Appendix}
\appendix
\section{Theory}
\label{app:theory}

\begin{definition}[Optimization objective in data distribution $\mathcal{D}$:]
\begin{align}
\min_{\beta} \mathbb{E}_{(\textbf{X},Y)\sim\mathcal{D}}[\textbf{X}\beta-Y]^2.
\end{align}
\end{definition}

\begin{definition}[Optimization objective in data $\mathcal{D}_{train}$:]
\begin{align}
    \min_{\beta} \mathbb{E}_{(\textbf{X},Y)\sim\mathcal{D}_{train}}[\textbf{X}\beta-Y]^2. 
\end{align}
\end{definition}

\begin{definition}[Optimization objective in data $\mathcal{D}_{train}$ with regularization:]
\begin{align}
    \min_{\beta} \mathbb{E}_{(\textbf{X},Y)\sim\mathcal{D}_{train}}[\textbf{X}\beta-Y]^2+k\|\beta\|^2,
\end{align}
\end{definition}

\begin{lemma}
Given that $\omega_{cau}\circ\beta^*$ is the optimal solution of Problem~\ref{problem-0},
suppose that in $D_{train}$, $\textbf{X}_{spu}=(\textbf{X}\circ\omega_{cau})\gamma_{spu}+\epsilon_{spu}$ where $\mathbb{E}_{D_{train}}[\epsilon_{spu}]=0$ and $\gamma_{spu}\neq\textbf{0}$, we have that $\hat{\beta}_{spu}\triangleq\omega_{cau}\circ(\beta^*-\lambda\gamma_{spu})+\lambda\omega_{spu}$ is also an optimal solution of Problem~\ref{problem-1} for any $\lambda$:
\begin{align*}
    &\mathbb{E}_{(\textbf{X},Y)\sim D_{train}}\Big[\Big(|\textbf{X}(\omega_{cau}\circ\beta^*)-Y|_2\Big)\mid \textbf{X}\Big]
    =\mathbb{E}_{(\textbf{X},Y)\sim D_{train}}\Big[\Big(|\textbf{X}\hat{\beta}_{spu}-Y|_2\Big)\mid \textbf{X}\Big]
\end{align*}
\end{lemma}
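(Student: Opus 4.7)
The plan is to compute $\textbf{X}\hat{\beta}_{spu}$ in closed form by exploiting the disjoint supports of $\omega_{cau}$ and $\omega_{spu}$, and then to substitute the stated linear decomposition $\textbf{X}_{spu}=\textbf{X}_{cau}\gamma_{spu}+\epsilon_{spu}$ so that the $\lambda$-dependent terms collapse up to the zero-mean noise $\lambda\epsilon_{spu}$. Once this cancellation is in place, the residuals of $\hat{\beta}_{spu}$ and of $\omega_{cau}\circ\beta^{*}$ against $Y$ differ only by an additive mean-zero term, and the displayed equality of conditional expectations follows.

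Concretely, I would first split $\textbf{X}\hat{\beta}_{spu}$ along its causal and spurious pieces. Since $\omega_{cau}$ and $\omega_{spu}$ have disjoint support,
\begin{align*}
\textbf{X}\hat{\beta}_{spu}
&= \textbf{X}\bigl(\omega_{cau}\circ(\beta^{*}-\lambda\gamma_{spu})\bigr) + \lambda\,\textbf{X}\,\omega_{spu}\\
&= \textbf{X}_{cau}(\beta^{*}-\lambda\gamma_{spu}) + \lambda\,\textbf{X}_{spu}.
\end{align*}
Plugging $\textbf{X}_{spu}=\textbf{X}_{cau}\gamma_{spu}+\epsilon_{spu}$ into the second term cancels $-\lambda\textbf{X}_{cau}\gamma_{spu}$ and leaves
\begin{align*}
\textbf{X}\hat{\beta}_{spu} \;=\; \textbf{X}_{cau}\beta^{*} + \lambda\epsilon_{spu}
\;=\; \textbf{X}(\omega_{cau}\circ\beta^{*}) + \lambda\epsilon_{spu}.
\end{align*}
Hence the residual $\textbf{X}\hat{\beta}_{spu}-Y$ equals $\textbf{X}(\omega_{cau}\circ\beta^{*})-Y$ plus $\lambda\epsilon_{spu}$, and by the assumption $\mathbb{E}_{\mathcal{D}_{train}}[\epsilon_{spu}]=0$ this additive perturbation is mean-zero. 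Taking the expectation in the lemma's norm and exploiting this property makes the two conditional expectations agree, from which optimality of $\hat{\beta}_{spu}$ for Problem~\ref{problem-1} follows since $\omega_{cau}\circ\beta^{*}$ is assumed optimal and the two losses match.

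The main obstacle is the bookkeeping around the conditioning and the norm $|\cdot|_{2}$: conditioned on the full vector $\textbf{X}$ the quantity $\epsilon_{spu}=\textbf{X}_{spu}-\textbf{X}_{cau}\gamma_{spu}$ is already measurable, so a literal pointwise reading of the displayed equation would force $\lambda\epsilon_{spu}=0$. I would therefore read the conditioning as being on $\textbf{X}_{cau}$ only (equivalently, as an expectation that also integrates out $\epsilon_{spu}$ via its zero mean under $\mathcal{D}_{train}$); the cross term then vanishes and the two sides agree. Modulo this interpretational clarification, the proof reduces to the one-line algebraic cancellation above and requires no further machinery; the non-uniqueness in $\lambda$ then directly exhibits the whole family $\hat{\beta}_{spu}$ promised by the lemma.
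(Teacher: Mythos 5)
Your argument is essentially identical to the paper's own proof: the paper likewise rewrites $\beta^{*}=(\beta^{*}-\lambda\gamma_{spu})+\lambda\gamma_{spu}$, substitutes $(\textbf{X}\circ\omega_{cau})\gamma_{spu}=\textbf{X}_{spu}-\epsilon_{spu}$, and discards the leftover $\lambda\epsilon_{spu}$ by appealing to $\mathbb{E}_{D_{train}}[\epsilon_{spu}]=0$, which is exactly your one-line cancellation run in the opposite direction. The interpretational caveat you raise (that conditioning on the full $\textbf{X}$ makes $\epsilon_{spu}$ deterministic, and that a mean-zero term cannot in general be dropped inside the norm $|\cdot|_2$) is a real weakness, but it is present, unacknowledged, in the paper's proof as well, so your write-up matches the paper and is if anything more candid about the step being glossed over.
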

\begin{proof}
\begin{align*}
&\mathbb{E}_{(\textbf{X},Y)\sim   D_{train}}\Big[\Big(|(\textbf{X}\circ\omega_{cau})\beta^*-Y|_2\Big)\mid \textbf{X}\Big]\\
=&\mathbb{E}_{(\textbf{X},Y)\sim D_{train}}\Big\{\Big[|(\textbf{X}\circ\omega_{cau})(\beta^*-\lambda\gamma_{spu}+\lambda\gamma_{spu})
-Y|_2\Big]\mid \textbf{X}\Big\}\\
=&\mathbb{E}_{(\textbf{X},Y)\sim D_{train}}\Big\{\Big[|(\textbf{X}\circ\omega_{cau})(\beta^*-\lambda\gamma_{spu}) 
+(\textbf{X}\circ\omega_{cau})\lambda \gamma_{spu}-Y|_2\Big]\mid \textbf{X}\Big\}\\
=&\mathbb{E}_{(\textbf{X},Y)\sim D_{train}}\Big\{\Big[|(\textbf{X}\circ \omega_{cau})(\beta^*-\lambda\gamma_{spu})
+\lambda (X_{spu}-\epsilon_{spu})-Y|_2\Big]\mid \textbf{X}\Big\}\\
=&\mathbb{E}_{(\textbf{X},Y)\sim D_{train}}\Big\{\Big[|\textbf{X}(\omega_{cau}\circ(\beta^*-\lambda\gamma_{spu}))
+\lambda (\textbf{X}\circ\omega_{spu})-Y|_2\Big]\mid \textbf{X}\Big\}\\
 &\quad \quad\Big(\textrm{Since\ }\mathbb{E}_{(\textbf{X},Y)\sim D_{train}}[\epsilon_{spu}]=0\Big)\\
=&\mathbb{E}_{(\textbf{X},Y)\sim D_{train}}\Big\{\Big[|\textbf{X}(\omega_{cau}\circ(\beta^*-\lambda\gamma_{spu})+\lambda\omega_{spu})
-Y|_2\Big]\mid \textbf{X}\Big\}\\
=&\mathbb{E}_{(\textbf{X},Y)\sim D_{train}}\Big\{\Big[| X\hat{\beta}_{spu}-Y|_2\Big]\mid \textbf{X}\Big\} 
\quad \quad  \\
&\Big(\rm{Let} \  \hat{\beta}_{spu} \rm{denote} \ \ \omega_{cau}\circ(\beta^*-\lambda\gamma_{spu})+\lambda\omega_{spu}\Big)
\end{align*}
\end{proof}
\begin{lemma}[\textbf{$\lambda$ Lemma}]
Given $\lambda$ as the coefficient in Lemma~\ref{lemma-equal}, and $k$ in Problem~\ref{problem-2} chosen by Hoerl-Kennard formula, we have the solution of $\lambda$ in Problem~\ref{problem-2} that:
\begin{align}
\lambda
&=\frac{ \beta^*\gamma_{spu}}{{\beta^*}^2+ \gamma_{spu}^{2}+1+\frac{\sigma ^{2}_{spu}}{\sigma^2_{cau}}(1+\frac{1}{(\beta^*)^2})}
\end{align}
\end{lemma}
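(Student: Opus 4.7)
The plan is to treat Problem~\ref{problem-2} as a standard ridge regression in the simplified case where $X_{cau}$ and $X_{spu}$ are scalars, and to extract $\lambda$ directly from its first-order conditions. Writing $\beta=(\beta_1,\beta_2)$ with $\beta_1$ and $\beta_2$ the coefficients on $X_{cau}$ and $X_{spu}$, the stationarity equations for the ridge objective are
\begin{align*}
\mathbb{E}_{D_{train}}[(X_{cau}\beta_1 + X_{spu}\beta_2 - Y)\,X_{cau}] + k\beta_1 &= 0,\\
\mathbb{E}_{D_{train}}[(X_{cau}\beta_1 + X_{spu}\beta_2 - Y)\,X_{spu}] + k\beta_2 &= 0.
\end{align*}
First I would substitute the structural equations $Y = X_{cau}\beta^* + \epsilon_{cau}$ and $X_{spu}=X_{cau}\gamma_{spu}+\epsilon_{spu}$ into both expectations and reduce the moments using the independence hypotheses $X_{cau}\upmodels\epsilon_{cau}$, $X_{cau}\upmodels\epsilon_{spu}$, and the zero-mean conditions. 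Writing $\sigma^2_{cau}\triangleq\mathrm{Var}(X_{cau})$ and $\sigma^2_{spu}\triangleq\mathrm{Var}(\epsilon_{spu})$, this collapses the two FOCs to a clean $2\times 2$ linear system in $(\beta_1,\beta_2)$ with $\beta^*$ and $\gamma_{spu}$ as constants.

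Second, I would eliminate $\beta_1$ by subtracting $\gamma_{spu}$ times the first FOC from the second; the common factor $\sigma^2_{cau}(\beta_1+\gamma_{spu}\beta_2-\beta^*)$ cancels and leaves $\beta_2(\sigma^2_{spu}+k)=\gamma_{spu}k\beta_1$. Combining this with the first FOC, which re-expresses $\beta_1$ as a shrinkage of $\beta^*-\gamma_{spu}\beta_2$, yields a single scalar equation with a closed-form solution $\beta_2(k)$. Identifying $\lambda\triangleq\beta_2$ (consistent with the Lemma~\ref{lemma-equal} parametrization once the $O(k)$ shrinkage on $\beta_1$ is absorbed) gives $\lambda$ as a rational function of $k,\beta^*,\gamma_{spu},\sigma^2_{cau}$ and $\sigma^2_{spu}$, whose denominator naturally splits into the pieces $\sigma^2_{spu}/k$, $\sigma^2_{spu}/\sigma^2_{cau}$, $1+\gamma_{spu}^2$, and $k/\sigma^2_{cau}$.

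Finally, I would plug in the Hoerl--Kennard value, which in this one-signal setting collapses to $k=\sigma^2_{spu}/(\beta^*)^2$: then $\sigma^2_{spu}/k$ becomes $(\beta^*)^2$ and $k/\sigma^2_{cau}$ becomes $\sigma^2_{spu}/(\sigma^2_{cau}(\beta^*)^2)$, so the four pieces combine into $(\beta^*)^2 + \gamma_{spu}^2 + 1 + (\sigma^2_{spu}/\sigma^2_{cau})(1+1/(\beta^*)^2)$, matching Equation~\ref{formula-lambda}. The moment bookkeeping is routine; the main obstacle I anticipate is pinning down which Hoerl--Kennard normalization the authors invoke, because the target denominator is rigid enough that only the form $k=\sigma^2_{spu}/(\beta^*)^2$ reproduces the displayed combination. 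A secondary subtlety is reconciling the exact ridge optimizer with the strictly parametrized family of Lemma~\ref{lemma-equal}: they agree on the $\beta_2$ component (and hence on $\lambda$), but on $\beta_1$ they differ by a multiplicative factor $\sigma^2_{cau}/(\sigma^2_{cau}+k)$ that is only negligible when $k\ll\sigma^2_{cau}$, so this asymptotic regime must be understood as implicit in the authors' statement.
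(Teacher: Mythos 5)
Your derivation is correct and is essentially the paper's own proof: the paper plugs $\hat{\beta}_{spu}$ into the ridge closed form $(\textbf{X}^T\textbf{X}+k\textbf{I})^{-1}\textbf{X}^T\textbf{Y}$ — i.e., exactly your two normal equations — arrives at the same intermediate expression $\lambda=\frac{\sigma^2_{cau}\beta^*\gamma_{spu}k}{\sigma^2_{cau}\sigma^2_{spu}+\sigma^2_{cau}\gamma_{spu}^2k+\sigma^2_{cau}k+\sigma^2_{spu}k+k^2}$ that your elimination of $\beta_1$ produces, and then substitutes the same Hoerl--Kennard choice $k=\sigma^2_{spu}/(\beta^*)^2$. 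Your closing observation that the exact ridge minimizer's $\beta_1$ agrees with the Lemma~\ref{lemma-equal} parametrization only up to the shrinkage factor $\sigma^2_{cau}/(\sigma^2_{cau}+k)$ is a genuine subtlety that the paper's one-line identification of $\lambda$ silently elides.
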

\begin{proof}
Since the solution of the ridge regression is $$\beta(k)=(\textbf{X}^T\textbf{X}+k\textbf{I})^{-1}\textbf{X}^T\textbf{Y},$$
we take $\hat{\beta}_{spu}$ into this solution and get: 
\begin{align}
\lambda=\frac{\sigma ^{2}_{cau} \beta^*\gamma_{spu}k}{\sigma ^{2}_{cau} \sigma ^{2}_{spu} +\sigma ^{2}_{cau} \gamma_{spu}^{2} k+\sigma ^{2}_{cau} k+\sigma ^{2}_{spu} k+k^{2}}
\end{align}
Since $k$ is chosen by  Hoerl-Kennard formula that $k=\frac{\sigma^2_{spu}}{(\beta^*)^2}$ , we have:
\begin{align*}
\lambda
&=\frac{\sigma ^{2}_{cau} \beta^*\gamma_{spu}}{\sigma ^{2}_{cau} \sigma ^{2}_{spu} /k+\sigma ^{2}_{cau} \gamma_{spu}^{2}+\sigma ^{2}_{cau}+\sigma ^{2}_{spu}+k}\\
&=\frac{\sigma ^{2}_{cau} \beta^*\gamma_{spu}}{\sigma ^{2}_{cau} \sigma ^{2}_{spu} /(\frac{\sigma^2_{spu}}{(\beta^*)^2})+\sigma ^{2}_{cau} \gamma_{spu}^{2}+\sigma ^{2}_{cau}+\sigma ^{2}_{spu}+\frac{\sigma^2_{spu}}{(\beta^*)^2}}\\
&=\frac{\sigma ^{2}_{cau} \beta^*\gamma_{spu}}{\sigma ^{2}_{cau}{\beta^*}^2+\sigma ^{2}_{cau} \gamma_{spu}^{2}+\sigma ^{2}_{cau}+\sigma ^{2}_{spu}+\frac{\sigma^2_{spu}}{(\beta^*)^2}}\\
&=\frac{\sigma ^{2}_{cau} \beta^*\gamma_{spu}}{\sigma ^{2}_{cau}({\beta^*}^2+ \gamma_{spu}^{2}+1)+\sigma ^{2}_{spu}+\frac{\sigma^2_{spu}}{(\beta^*)^2}}\\
&=\frac{ \beta^*\gamma_{spu}}{{\beta^*}^2+ \gamma_{spu}^{2}+1+\frac{\sigma ^{2}_{spu}}{\sigma^2_{cau}}(1+\frac{1}{(\beta^*)^2})}\\
\end{align*}
\end{proof}

\begin{proposition}
Given $\lambda$ as Formula~\ref{formula-lambda}, we have
$$
-\frac{1}{2}\leq\lambda\leq\frac{1}{2}.
$$
\end{proposition}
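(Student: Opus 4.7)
The plan is to control the denominator from below by something proportional to $|\beta^* \gamma_{spu}|$, which will immediately yield the factor of $1/2$. The extra ``nuisance term'' $\frac{\sigma^2_{spu}}{\sigma^2_{cau}}\bigl(1+\frac{1}{(\beta^*)^2}\bigr)$ is non-negative (both variances are non-negative and squares are non-negative), so discarding it only makes the denominator smaller and the bound on $|\lambda|$ weaker. Hence it suffices to prove
\begin{equation*}
\left|\frac{\beta^* \gamma_{spu}}{{\beta^*}^2 + \gamma_{spu}^2 + 1}\right| \leq \frac{1}{2}.
\end{equation*}

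The key step is a single application of the AM--GM inequality (equivalently, the elementary fact $(|\beta^*|-|\gamma_{spu}|)^2 \geq 0$), which gives ${\beta^*}^2 + \gamma_{spu}^2 \geq 2|\beta^* \gamma_{spu}|$. Adding the $+1$ on the left side only strengthens the inequality, so
\begin{equation*}
{\beta^*}^2 + \gamma_{spu}^2 + 1 \;\geq\; 2|\beta^* \gamma_{spu}|,
\end{equation*}
and rearranging gives exactly the desired bound $|\beta^* \gamma_{spu}|/({\beta^*}^2 + \gamma_{spu}^2 + 1) \leq 1/2$. Absorbing the sign of $\beta^* \gamma_{spu}$ yields the two-sided conclusion $-\tfrac{1}{2} \leq \lambda \leq \tfrac{1}{2}$.

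There is essentially no obstacle here: the statement reduces to a one-line AM--GM inequality after noting that the variance ratio contributes a non-negative term to the denominator. The only minor caveat is the implicit assumption that $\beta^* \neq 0$ so that $1/(\beta^*)^2$ is well-defined (this is already needed for Lemma~\ref{lambda-lemma} via the Hoerl--Kennard choice of $k$); under this assumption the bound is sharp, attained in the limit $\sigma_{spu}^2/\sigma_{cau}^2 \to 0$ and $|\beta^*| = |\gamma_{spu}| = 1$.
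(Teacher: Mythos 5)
Your proof is correct and follows essentially the same route as the paper's: discard the non-negative variance-ratio term in the denominator and apply AM--GM (${\beta^*}^2+\gamma_{spu}^2\geq 2|\beta^*\gamma_{spu}|$); if anything, keeping the absolute value $2|\beta^*\gamma_{spu}|$ throughout is slightly cleaner than the paper's final step, which passes through $|2\beta^*\gamma_{spu}+1|$ and is delicate when $\beta^*\gamma_{spu}<0$. One small correction to your closing aside: because of the $+1$ in the denominator, the bound is not attained at $|\beta^*|=|\gamma_{spu}|=1$ (that gives $\lambda=1/3$); it is only approached in the limit $|\beta^*|=|\gamma_{spu}|\to\infty$ with $\sigma^2_{spu}/\sigma^2_{cau}\to 0$.
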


\begin{proof}
\begin{align*}
|\lambda|
&=\frac{| \beta^*\gamma_{spu}|}{|{\beta^*}^2+ \gamma_{spu}^{2}+1+\frac{\sigma ^{2}_{spu}}{\sigma^2_{cau}}(1+\frac{1}{(\beta^*)^2})|}\\
&\leq \frac{| \beta^*\gamma_{spu}|}{|{\beta^*}^2+ \gamma_{spu}^{2}+1|+|\frac{\sigma ^{2}_{spu}}{\sigma^2_{cau}}(1+\frac{1}{(\beta^*)^2})|}\\
&\leq \frac{| \beta^*\gamma_{spu}|}{|{\beta^*}^2+ \gamma_{spu}^{2}+1|}\\
&\leq \frac{| \beta^*\gamma_{spu}|}{|2{\beta^*}\gamma_{spu}+1|}\\
&\leq \frac{1}{2}
\end{align*}
So we have :  $-\frac{1}{2}\leq\lambda\leq\frac{1}{2}$.
\end{proof}

\begin{theorem}[\textbf{Spurious Theorem}]
Let $\mathcal{D}=\{(\textbf{X},Y)\}$ denote the data distribution, $\hat{\beta}_{spu}$ denote the solution in Lemma~\ref{lemma-equal} with $\lambda$ in Lemma~\ref{lambda-lemma},
and $\hat{Y}_{spu}=\textbf{X}\hat{\beta}_{spu}$ denote the prediction.
Suppose that the data value is bounded: $|X_i|_1 \leq X_{max}, i=1,\cdots,n$ and the error of optimal solution $\epsilon_{cau}$ is also bounded: $|\epsilon_{cau}|_1\leq\epsilon_c$, we have the model prediction error bound:
\begin{align}
\mathbb{E}_{(\textbf{X},Y)\sim D}[(|\hat{Y}_{spu}-Y|_1)\mid \textbf{X}]\leq X_{max}|\lambda|_1(|\gamma_{spu}|_1+1)+\epsilon_{c}.
\end{align}
\end{theorem}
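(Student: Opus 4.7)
}
The plan is to reduce the bound to a direct triangle-inequality estimate after substituting the explicit form of $\hat{\beta}_{spu}$ provided by Lemma~\ref{lemma-equal}. The key observation is that the optimal causal regressor and the biased ridge regressor differ only on coordinates controlled by $\lambda$, $\gamma_{spu}$, and $\omega_{spu}$, so the residual $\hat{Y}_{spu}-Y$ collapses to an expression involving just those quantities plus the irreducible noise $\epsilon_{cau}$.

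First I would write out the residual. Using $Y=(\textbf{X}\circ\omega_{cau})\beta^{*}+\epsilon_{cau}$ and
$\hat{\beta}_{spu}=\omega_{cau}\circ(\beta^{*}-\lambda\gamma_{spu})+\lambda\omega_{spu}$, one expands
\begin{align*}
\hat{Y}_{spu}-Y
&=\textbf{X}\bigl[\omega_{cau}\circ(\beta^{*}-\lambda\gamma_{spu})+\lambda\omega_{spu}\bigr]-(\textbf{X}\circ\omega_{cau})\beta^{*}-\epsilon_{cau}\\
&=\lambda\bigl[\textbf{X}\,\omega_{spu}-\textbf{X}(\omega_{cau}\circ\gamma_{spu})\bigr]-\epsilon_{cau},
\end{align*}
where the $\textbf{X}(\omega_{cau}\circ\beta^{*})$ pieces cancel exactly. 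This mirrors the identity used in the proof of Lemma~\ref{lemma-equal}, except that now we are evaluating the residual against the true generative equation of $Y$ (which holds on $\mathcal{D}$ as well as on $\mathcal{D}_{train}$), rather than against the optimal predictor on $\mathcal{D}_{train}$.

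Next I would apply the triangle inequality under $|\cdot|_{1}$ and invoke the hypotheses $|X_{i}|_{1}\le X_{max}$ and $|\epsilon_{cau}|_{1}\le\epsilon_{c}$:
\begin{align*}
|\hat{Y}_{spu}-Y|_{1}
&\le|\lambda|_{1}\,|\textbf{X}\,\omega_{spu}|_{1}+|\lambda|_{1}\,|\textbf{X}(\omega_{cau}\circ\gamma_{spu})|_{1}+|\epsilon_{cau}|_{1}\\
&\le|\lambda|_{1}\,X_{max}+|\lambda|_{1}\,X_{max}\,|\gamma_{spu}|_{1}+\epsilon_{c}\\
&=X_{max}\,|\lambda|_{1}\bigl(|\gamma_{spu}|_{1}+1\bigr)+\epsilon_{c}.
\end{align*}
Taking the conditional expectation $\mathbb{E}_{(\textbf{X},Y)\sim\mathcal{D}}[\cdot\mid\textbf{X}]$ on both sides preserves the inequality because the right-hand side depends only on $\textbf{X}$ together with deterministic constants, and the noise term is controlled uniformly by $\epsilon_{c}$; this yields exactly the claimed bound.

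The main obstacle I anticipate is bookkeeping rather than depth: making sure the Hadamard products, the indicator weights $\omega_{cau},\omega_{spu}$, and the dimensions of $\gamma_{spu}$ are consistent so that the cancellation in the residual is clean, and making sure that the bound $|\textbf{X}(\omega_{cau}\circ\gamma_{spu})|_{1}\le X_{max}|\gamma_{spu}|_{1}$ is legitimate under the stated coordinate-wise bound on $\textbf{X}$. Everything after the residual identity is a direct application of the triangle inequality and the assumed $\ell_{1}$ bounds, so I do not expect any substantive technical difficulty beyond these notational checks.
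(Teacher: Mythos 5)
Your proposal is correct and follows essentially the same route as the paper's proof: the paper introduces the intermediate quantity $\hat{Y}_{cau}=(\textbf{X}\circ\omega_{cau})\beta^{*}$ and splits $\hat{Y}_{spu}-Y=(\hat{Y}_{spu}-\hat{Y}_{cau})+(\hat{Y}_{cau}-Y)$, which is exactly your residual identity $\lambda[\textbf{X}\,\omega_{spu}-\textbf{X}(\omega_{cau}\circ\gamma_{spu})]-\epsilon_{cau}$ written in one step. The subsequent triangle-inequality and $\ell_{1}$ bounds match the paper's term for term.
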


\begin{proof}

Let $\hat{Y}_{cau}$ denote $(\textbf{X}\circ\omega_{cau})\beta^*$, we have
\begin{align*}
&\mathbb{E}_{(\textbf{X},Y)\sim D_{test}}\Big[\Big(|\hat{Y}_{spu}-Y|_1\Big)\mid \textbf{X}\Big]\\
=&\mathbb{E}_{(\textbf{X},Y)\sim D_{test}}\Big[\Big(|(\hat{Y}_{spu}-\hat{Y}_{cau})+(\hat{Y}_{cau}-Y)|_1\Big)\mid \textbf{X}\Big]\\
\leq&\mathbb{E}_{(\textbf{X},Y)\sim D_{test}}\Big[\Big(|(\hat{Y}_{spu}-\hat{Y}_{cau})|_1\Big)\mid \textbf{X}\Big] 
+\mathbb{E}_{(\textbf{X},Y)\sim D_{test}}\Big[\Big(|(\hat{Y}_{cau}-Y)|_1\Big)\mid \textbf{X}\Big]\\
\leq&\mathbb{E}_{(\textbf{X},Y)\sim D_{test}}\Big[\Big(|\textbf{X}\lambda(-\omega_{cau}\circ\gamma_{spu}+\omega_{spu})|_1\Big)\mid \textbf{X}\Big]+\epsilon_{c}\\
=&\mathbb{E}_{(\textbf{X},Y)\sim D_{test}}\Big[\Big(|\textbf{X}\lambda (-\gamma_{spu}+\omega_{spu})|_1\Big)\mid \textbf{X}\Big]+\epsilon_{c}\\
\leq&\mathbb{E}_{(\textbf{X},Y)\sim D_{test}}\Big[\Big(X_{max}|\lambda|_1 *|-\gamma_{spu}+\omega_{spu}|_1\Big)\mid \textbf{X}\Big]+\epsilon_{c}\\
\leq&\mathbb{E}_{(\textbf{X},Y)\sim D_{test}}\Big[\Big(X_{max}|\lambda|_1*(|\gamma_{spu}|_1+1)\Big)\mid \textbf{X}\Big]+\epsilon_{c}\\
=&X_{max}|\lambda|_1(|\gamma_{spu}|_1+1)+\epsilon_{c}
\end{align*}
\end{proof}

\begin{theorem}[\textbf{RL Spurious Theorem}]
Given an MDP with the state dimension $n_s$ and the action dimension $n_a$, 
a data-collecting policy $\pi_D$,
let $M^*$ denote the true transition model, 
$M_\theta$ denote the learned model that $M_\theta^i$ predicts the $i^{th}$ dimension with spurious variable sets $spu_i$ and causal variables $cau_i$, i.e.,  $\hat{S}_{t+1,i}=M^i_{\theta}((\textbf{S}_t,\textbf{A}_t)\circ \omega_{cau_i\cup spu_i})$.
Let $V_{\pi}^{M_\theta}$ denote the policy value of the policy $\pi$ in model $M_{\theta}$ and correspondingly $V_{\pi}^{M^*}$.
For any bounded divergence policy $\pi$, i.e. $\max_{S} D_{KL}(\pi(\cdot|S),\pi_{D}(\cdot|S))\leq \epsilon_{\pi}$, we have the policy evaluation error bound:
\begin{align}
|V_{\pi}^{M_\theta}-V_{\pi}^{M^*}|\leq &
\frac{2\sqrt{2}R_{max}}{(1-\gamma)^2}\sqrt{\epsilon_{\pi}}+ 
&\frac{R_{max}\gamma}{2(1-\gamma)^2}S_{max}[n_s\epsilon_c+
&(1+\gamma_{max})\lambda_{max}n_s(n_s+n_a)R_{spu}]
\end{align}
where $R_{spu}=\frac{\sum_{i=1}^{n_s}|spu_i|}{n_s(n_s+n_a)}$, which represents the spurious variable density, that is, the ratio of spurious variables in all input variables .
\end{theorem}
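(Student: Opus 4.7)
The plan is to decompose $|V_\pi^{M_\theta} - V_\pi^{M^*}|$ into a contribution from the per-dimension model error (which the Spurious Theorem controls on the training distribution induced by $\pi_D$) and a contribution from the distribution mismatch between $\pi$ and $\pi_D$ (which the KL bound $\epsilon_\pi$ controls). Concretely, I would aim at an inequality of the form $|V_\pi^{M_\theta} - V_\pi^{M^*}| \leq (\textrm{policy-shift term}) + (\textrm{model-error term under } d_{\pi_D})$ and then specialize each half so that the constants agree with the stated bound.

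First I would invoke a simulation/telescoping lemma: expanding $V_\pi^{M_\theta} - V_\pi^{M^*}$ as a geometric sum of one-step discrepancies yields $|V_\pi^{M_\theta} - V_\pi^{M^*}| \leq \frac{\gamma R_{max}}{(1-\gamma)^2}\,\mathbb{E}_{(s,a)\sim d_\pi}\|M_\theta(\cdot|s,a)-M^*(\cdot|s,a)\|_{\mathrm{TV}}$, using $\|V_\pi^{M^*}\|_\infty \leq R_{max}/(1-\gamma)$. Converting the TV distance on the next-state distribution into the coordinatewise $L_1$ prediction error controlled by the Spurious Theorem, and using $|S_{t+1,i}|_1 \leq S_{max}$, contributes the leading factor $\tfrac{1}{2} S_{max}$ that appears in the statement.

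Next, for each coordinate $i\in\{1,\dots,n_s\}$ the predictor $M_\theta^i$ takes inputs $\omega_{cau_i\cup spu_i}\circ(\textbf{S}_t,\textbf{A}_t)$, so the Spurious Theorem applied coordinatewise gives $\mathbb{E}_{(s,a)\sim d_{\pi_D}}|\hat S_{t+1,i}-S_{t+1,i}|_1 \leq S_{max}\,|\lambda^{(i)}|_1(|\gamma^{(i)}_{spu}|_1+1)+\epsilon_c$. Bounding $|\lambda^{(i)}|_1\leq |spu_i|\lambda_{max}$ and $|\gamma^{(i)}_{spu}|_1\leq \gamma_{max}$, then summing over $i$, yields the aggregate dynamics error $n_s\epsilon_c + (1+\gamma_{max})\lambda_{max}\sum_i |spu_i| = n_s\epsilon_c + (1+\gamma_{max})\lambda_{max}\,n_s(n_s+n_a)R_{spu}$ by the definition of $R_{spu}$; multiplying by $\frac{R_{max}\gamma}{2(1-\gamma)^2}S_{max}$ reproduces the second bracketed summand of the theorem. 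For the policy-mismatch contribution I would translate the KL bound into TV via Pinsker, $\max_s \textrm{TV}(\pi(\cdot|s),\pi_D(\cdot|s))\leq \sqrt{\epsilon_\pi/2}$, and apply the standard performance-difference / occupancy-measure lemma in a fixed model, $|V_\pi^{M}-V_{\pi_D}^{M}| \leq \frac{2R_{max}}{(1-\gamma)^2}\max_s \textrm{TV}(\pi(\cdot|s),\pi_D(\cdot|s))$; the triangle-inequality step needed to pass from $d_\pi$ to $d_{\pi_D}$ in the simulation-lemma stage accumulates a second such shift and thereby produces the $2\sqrt{2}R_{max}/(1-\gamma)^2$ prefactor in front of $\sqrt{\epsilon_\pi}$.

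The hard part will be the bookkeeping: replacing the expectation under $d_\pi$ in the simulation lemma by an expectation under $d_{\pi_D}$ (where the Spurious Theorem actually applies) while funnelling all the $\sqrt{\epsilon_\pi}$-type slack into a single clean prefactor. A more delicate point is lifting the Spurious Theorem from its one-dimensional linear scalar setting to the full multi-coordinate dynamics: one must argue that a per-coordinate ridge solution is valid despite possible cross-coupling among the $|spu_i|$ spurious inputs of coordinate $i$, that the maximising constants $\lambda_{max}$ and $\gamma_{max}$ genuinely dominate every $|\lambda^{(i)}|_1/|spu_i|$ and $|\gamma^{(i)}_{spu}|_1$, and that any slack from these majorizations is absorbed either into $\epsilon_c$ or into the already-loose constants of the final bound.
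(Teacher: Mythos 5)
Your proposal follows essentially the same route as the paper's proof: the triangle-inequality decomposition through $\pi_D$ absorbing the two policy-shift terms into the $\frac{2\sqrt{2}R_{max}}{(1-\gamma)^2}\sqrt{\epsilon_\pi}$ prefactor, the occupancy-measure/simulation lemmas (which the paper imports from prior work) to convert the value gap under $\pi_D$ into a one-step model-error term, and the coordinatewise application of the Spurious Theorem summed over $i$ to produce $n_s\epsilon_c+(1+\gamma_{max})\lambda_{max}n_s(n_s+n_a)R_{spu}$. The caveats you raise about lifting the scalar Spurious Theorem to multiple coupled spurious inputs per coordinate are real, but the paper's own proof makes exactly the same majorization without further justification, so your attempt matches it.
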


\begin{proof}

Before proving, we first introduce three lemmas:
\begin{lemma}
\begin{align*}
    |V_{\pi}^{M_\theta}-V_{\pi}^{M^*}|\leq& |V_{\pi}^{M^*}-V_{\pi_D}^{M^*}|+ |V_{\pi_D}^{M_\theta}-V_{\pi_D}^{M^*}|
    +|V_{\pi_D}^{M_\theta}-V_{\pi}^{M_\theta}| \\
     \leq&\frac{2\sqrt{2}R_{max}}{(1-\gamma)^2}\sqrt{\epsilon_\pi}+|V_{\pi_D}^{M_\theta}-V_{\pi_D}^{M^*}|
\end{align*}
\end{lemma}

\begin{lemma}
\begin{align*}
    |V_{\pi_D}^{M_{\theta}}-V_{\pi_D}^{M_{*}}|\leq \frac{R_{max}}{1-\gamma}\sum_s|d_{\pi_D}^{M_\theta}(s)-d_{\pi_D}^{M^*}(s)|\sum_a\pi_D(a|s)
\end{align*}
\end{lemma}

\begin{lemma}
\begin{align*}
    |d_{\pi_D}^{M_\theta}(s)-d_{\pi_D}^{M^*}(s)|
\leq& \frac{\gamma}{(1-\gamma)}\sum_{s,a,s'}|M_\theta(S_t,A_t)-M^*(S_t,A_t)| 
    \pi_D(a|s)d_{\pi_D}^{M^*}(s)
\end{align*}
\end{lemma}
The detailed proof of these lemmas can be found in \cite{DBLP:conf/nips/XuLY20}, which is omitted in this paper.
Based on the model prediction error bound in Theorem~\ref{theorem-spurious}, we have:
\begin{align*}
|M_\theta(S_t,A_t)-M^*(S_t,A_t)|
=&\sum_{i=1}^{n_s}|M_\theta^i(S_t,A_t)-M^{*,i}(S_t,A_t)| \\
\leq&\sum_{i=1}^{n_s} S_{max}[\epsilon_c+(\gamma_{max}+1)\lambda_{max}|spu_i|] \\
=&S_{max}[{n_s}\epsilon_c+(\gamma_{max}+1)\lambda_{max}\sum_{i=1}^{n_s}|spu_i|]\\
=&S_{max}[{n_s}\epsilon_c+(\gamma_{max}+1)\lambda_{max}n_s(n_s+n_a)R_{spu}]
\end{align*}
With above lemmas, we have: 
\begin{align*}
|V_{\pi}^{M_\theta}-V_{\pi}^{M^*}|\leq &
\frac{2\sqrt{2}R_{max}}{(1-\gamma)^2}\sqrt{\epsilon_{\pi}}+
&\frac{R_{max}\gamma}{2(1-\gamma)^2}S_{max}[n_s\epsilon_c+
&(\gamma_{max}+1)\lambda_{max}n_s(n_s+n_a)R_{spu}]
\end{align*}

\end{proof}
\section{Algorithm}
\subsection{Choosing the Threshold of p-value}
\label{app:alg-1}
To be fair, we share a common $p^*$ for the testing between any two variables.
The choice of $p^*$ significantly influences the accuracy of causal discovery that too small and too big both lead to causal misspecification.
The intuition behind our choosing principle is that there is a significant gap in the $p$ value between the causal relation and non-causal relation.
Based on this intuition, we partition the probability range $[0,1]$ into several intervals $[0,p_1),[p_1,p_2),\cdots,[p_n,1]$ according to the sorted $p$ values $\{p_i\}_{i=1}^n$ and design $p^*$ by the formula:
\begin{align}
p^*=\arg\max_{p_i} \frac{p_{i+1}}{i+1}-\frac{p_i}{i}.
\end{align}
If we only consider the biggest gap between $p_i$, then we will easily choose a big but improper $p^*$ due to the distribution of $p_i$ in some intervals (e.g., [0.5,1]) may be very sparse and thus leads to a big gap.

\begin{figure}[ht!]
\begin{center}
\subfigure[\textit{Random}]{
        \centering
        \includegraphics[width=0.3\linewidth]{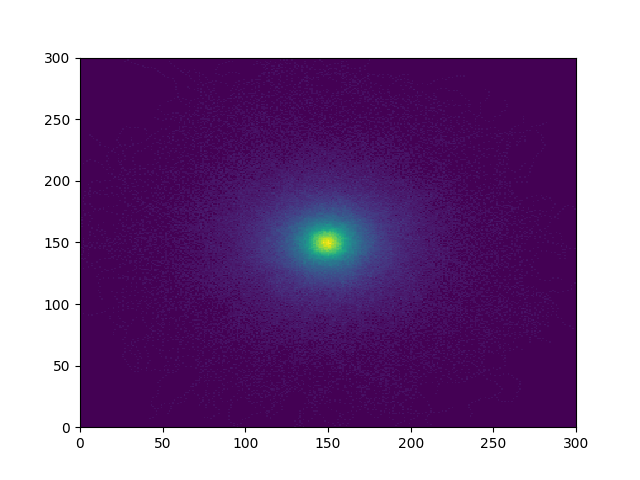} 
        }
\subfigure[\textit{Medium}]{
        \centering
        \includegraphics[width=0.3\linewidth]{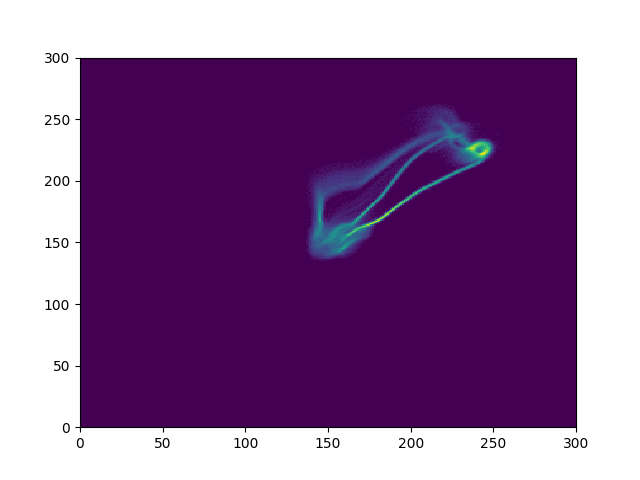} 
        }
\subfigure[\textit{Medium-Replay}]{
        \centering
        \includegraphics[width=0.3\linewidth]{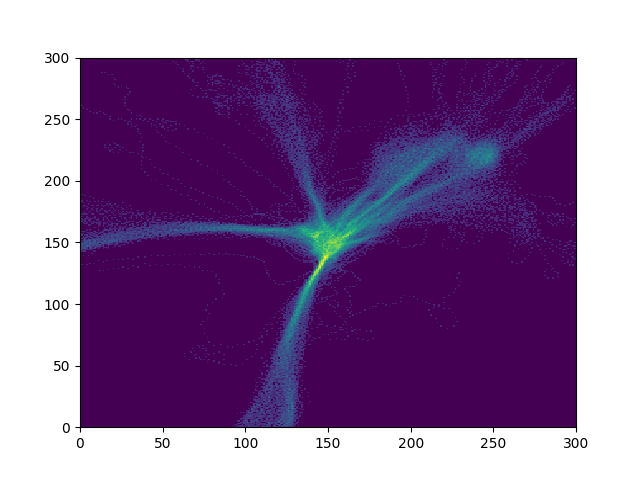} 
        }
\end{center}
\caption{The heat map of the three offline data sets. The high brightness represents high data density.}
\label{energy}
\end{figure}
\subsection{Causal Structure Network}
\label{app:alg-2}
The details of Causal Structure Network is shown in Algorithm~\ref{alg-network}. 

\begin{algorithm}[ht!]
  \caption{Causal Structure Network $\mathcal{M}_{Causal}(\cdot)$}
  \label{alg-network}
\begin{algorithmic}
  \STATE {\bfseries Input:} state $\textbf{s}_t\in\mathbb{R}^{n_s}$, action $\textbf{a}_t\in\mathbb{R}^{n_a}$, 
  \STATE causal structure mask matrix $G\in\{0,1\}^{(n_s,n_a)\times n_s}$,
  \STATE Make $\mathcal{M}_{i}(\cdot;\theta_i)$ as the copy of the basic model $\mathcal{M}(\cdot;\theta)$, where $i=1,\cdots, n_s$.
  \FOR {$i=1$ {\bfseries to} $n_s$}
  \STATE {Let $G_{\cdot,i}$ denote the $i^{th}$ column of $G$}
  \STATE {Get the masked input $X=(\textbf{s}_t,\textbf{a}_t)\circ G_{\cdot,i}$}
  \STATE {Get prediction $\tilde{Y}=\mathcal{M}_{i}(X;\theta_i)\in\mathbb{R}^{n_s}$}
  \STATE {Let $Y_{i}$ denote the $i^{th}$ element of $\tilde{Y}$.}
  \ENDFOR
  \STATE {\bfseries Return} $Y=(Y_i)_{i=1}^{n_s}$.
\end{algorithmic}
\end{algorithm}
\section{Experiments}
\subsection{Environment Details}
\label{app:exp-1}

The heat map of the data diversity is shown in Fig~\ref{energy}. 
In \textit{Random}, the data is clustered around the origin.
In \textit{Medium}, the data is gathered on a fixed trajectory from the origin to the destination.
In \textit{Medium-Replay}, the data is much more diverse where a lot of unseen data in above data sets is also sampled. 

The visualization of the state in Car Driving and the ground truth of its causal graph are shown in Fig~\ref{fig-benchmark}.

MuJoCo formulates robot control into MDPs with discrete timestep via equal interval sampling of the continuous-time. 
Therefore, for each timestep $t$, $s_{t+1}$ is the result of numerous times of simulation based on $s_t$ with repeated action $a_t$. 
Even if spurious variables are existed in one time of simulation, after numerous simulations, the causal effect will be propagated to almost variables, which leads to a full-connection causal graph ($R_{spu}=0$).
Therefore FOCUS degrades into vanilla MOPO in this scenario, which is meaningless to test. 
Fortunately, after analyzing the propagate progress of the dynamics, we found that the \textit{Inverted Pendulum} is a special case where the causal graph will keep sparse after numerous simulations.

\subsection{Experiment Result Details}
\label{app:exp-2}

The detailed training curves are shown in Fig~\ref{fig-policy-return}.

\begin{figure}[h!]
\centering
\subfigure[\textit{Random}]{
        \centering
        \includegraphics[width=0.3\linewidth]{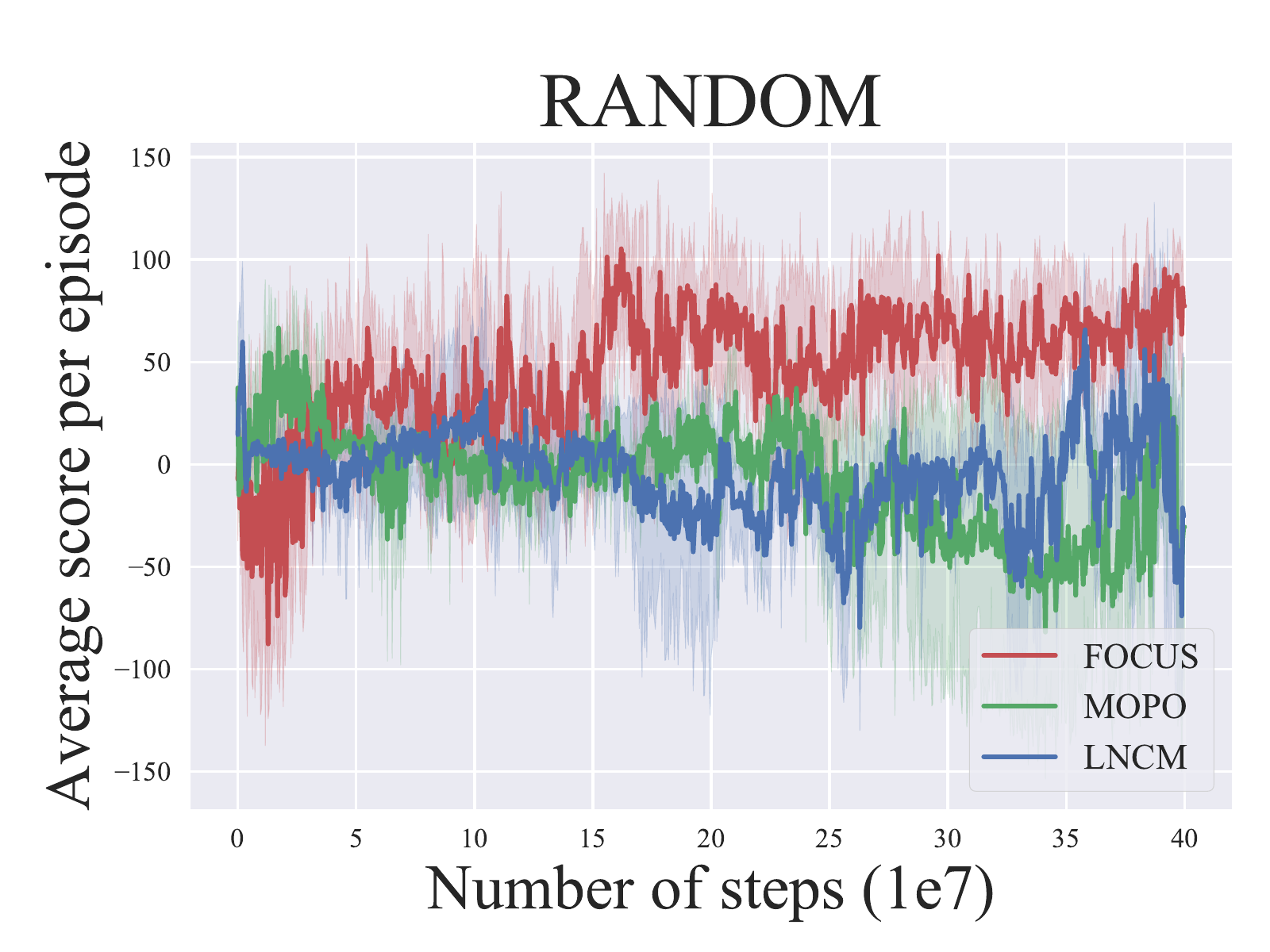} 
        }
\subfigure[\textit{Medium}]{
        \centering
        \includegraphics[width=0.3\linewidth]{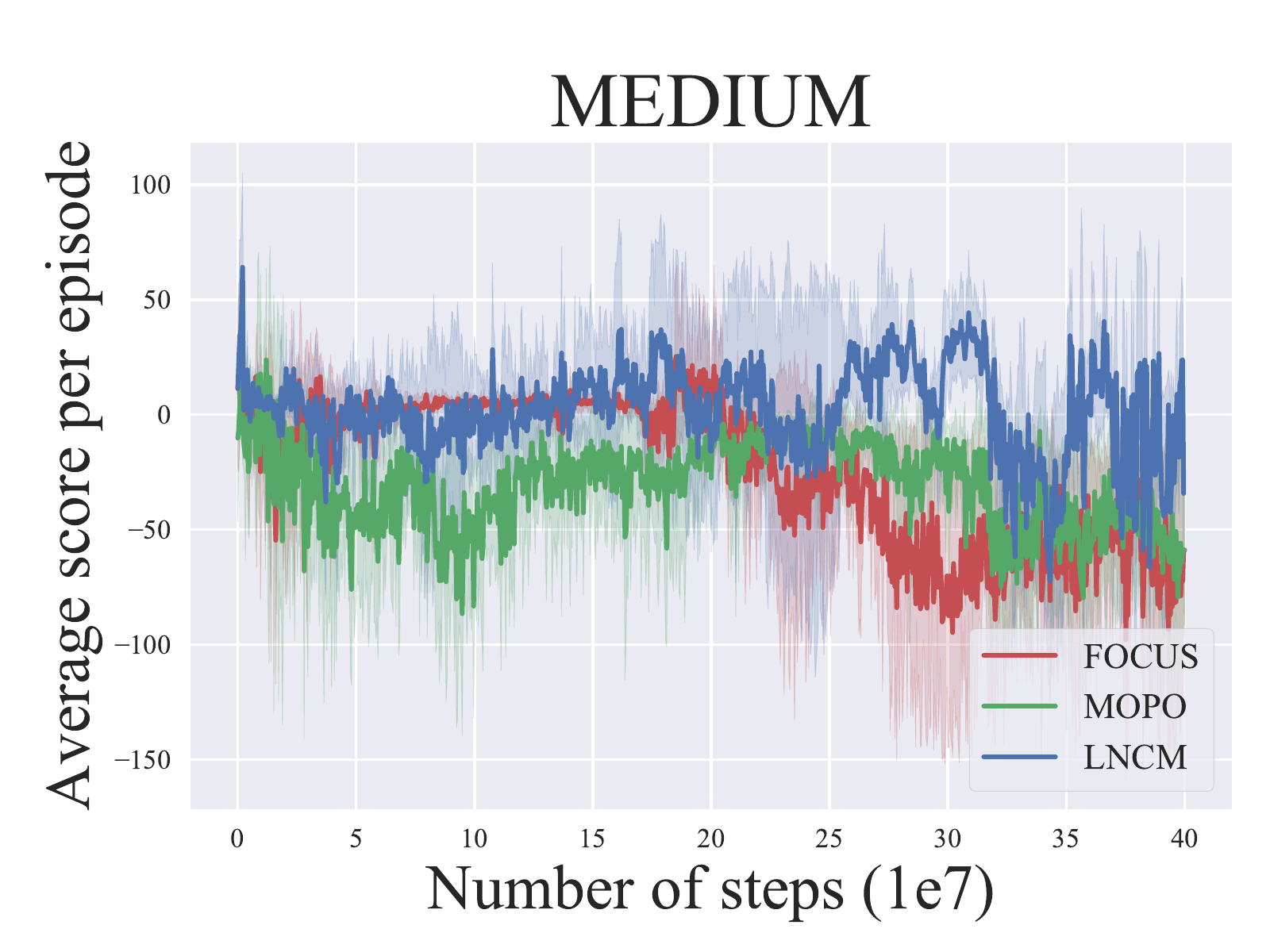}
        }
\subfigure[\textit{Medium-Replay}]{
        \centering
        \includegraphics[width=0.3\linewidth]{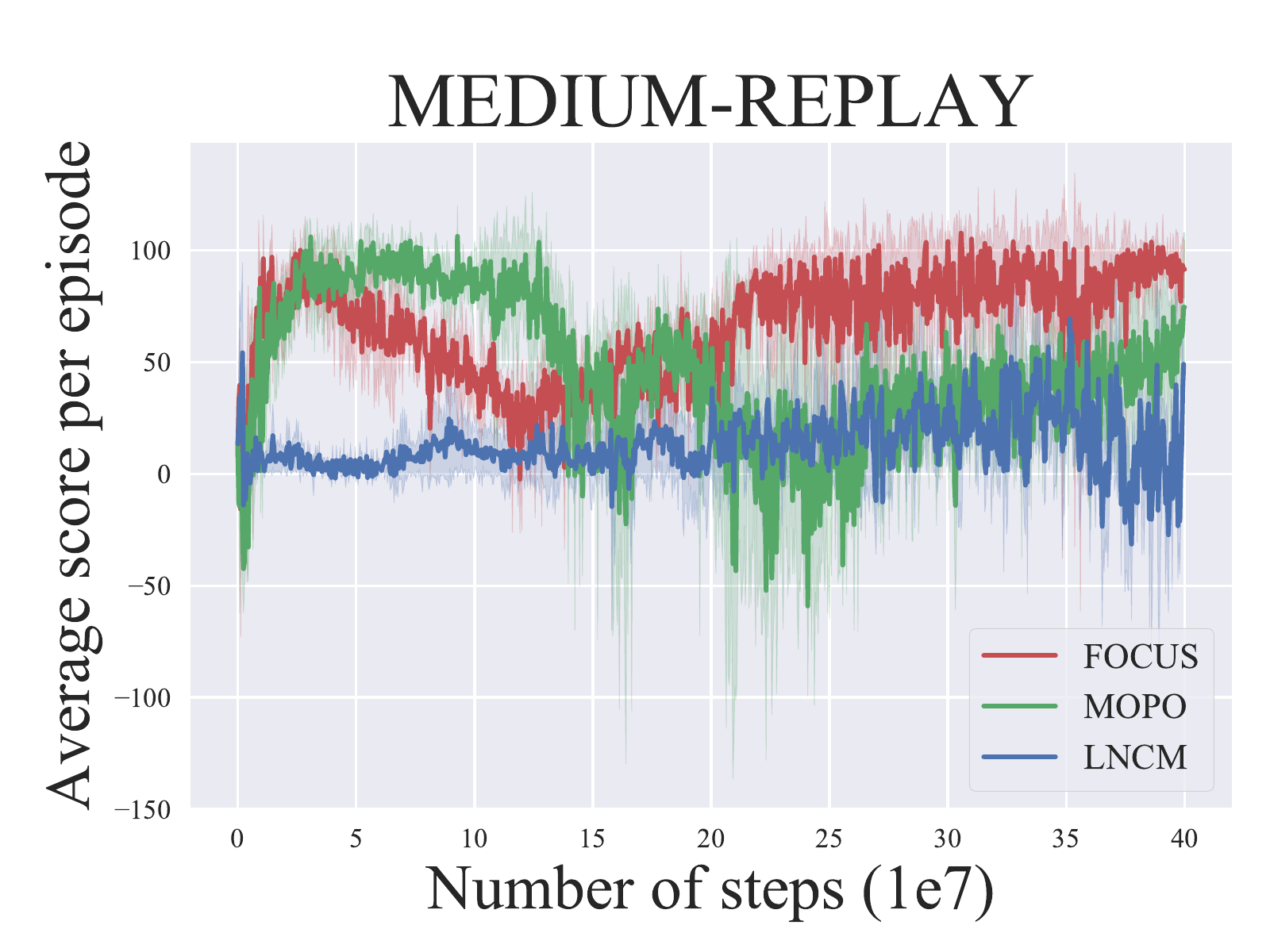}
        }
       \vskip -0.1in
       
\subfigure[\textit{Random}]{
        \centering
        \includegraphics[width=0.3\linewidth]{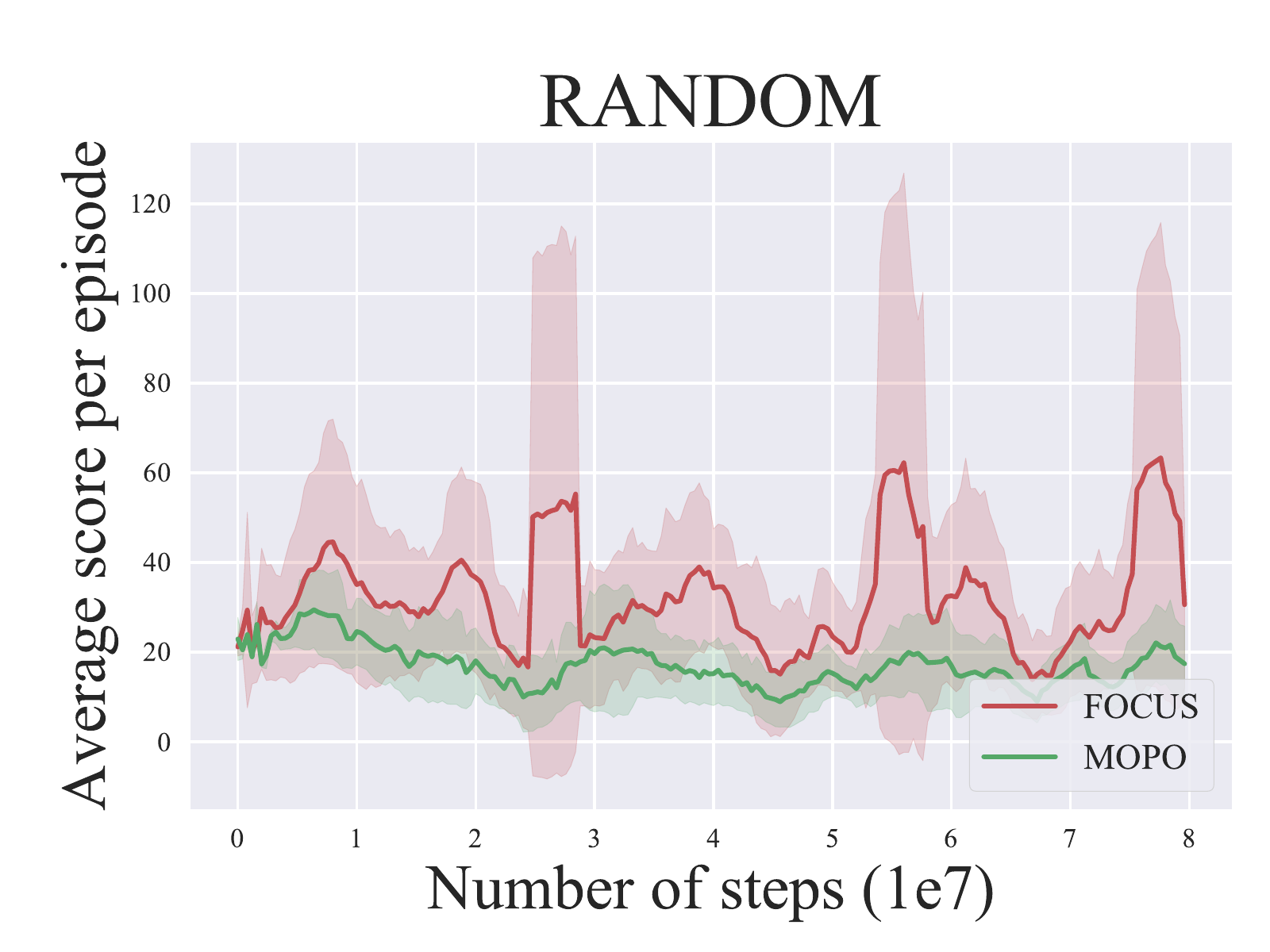}
        }
\subfigure[\textit{Medium}]{
        \centering
        \includegraphics[width=0.3\linewidth]{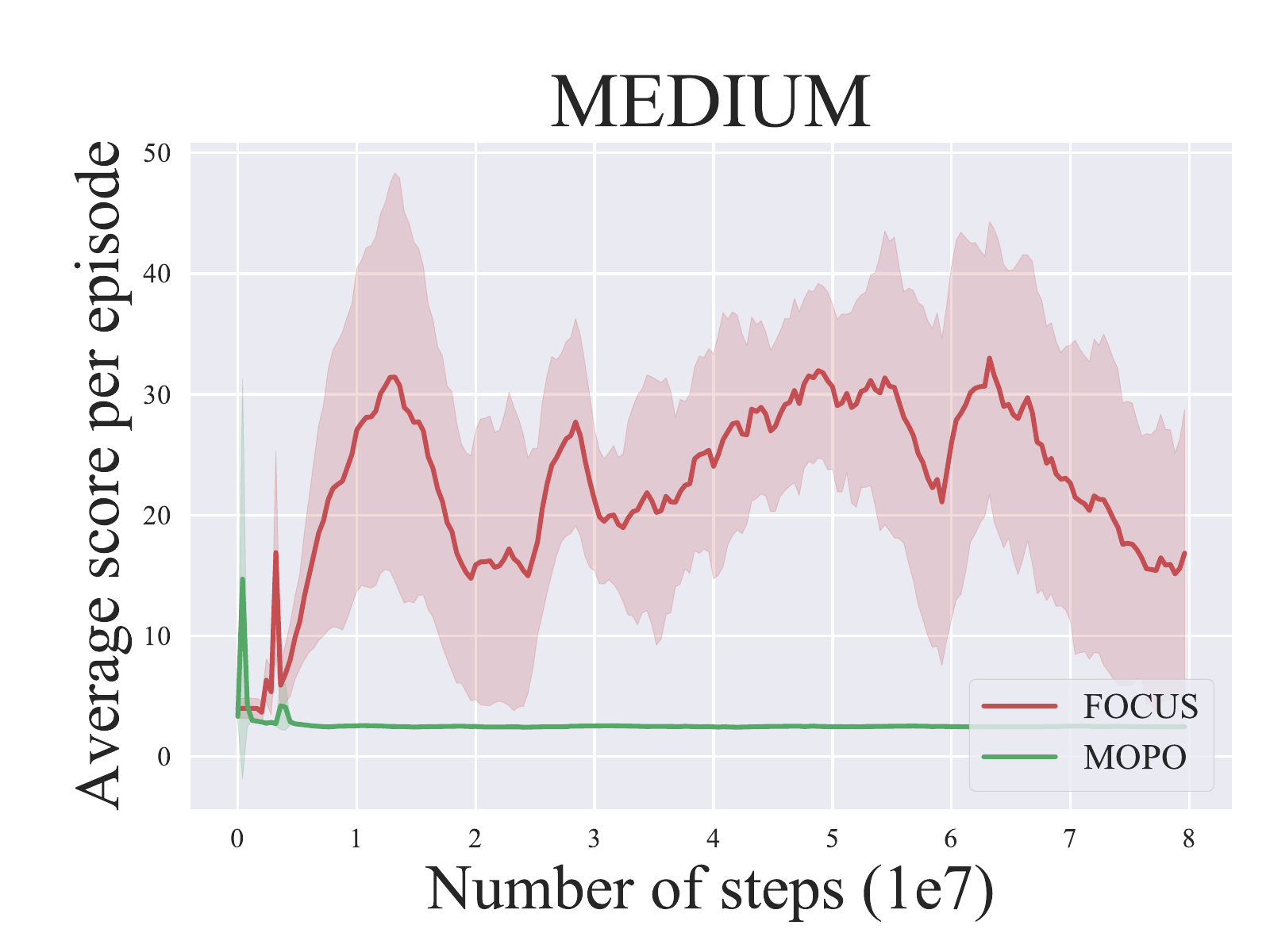}
        }
\subfigure[\textit{Medium-Replay}]{
        \centering
        \includegraphics[width=0.3\linewidth]{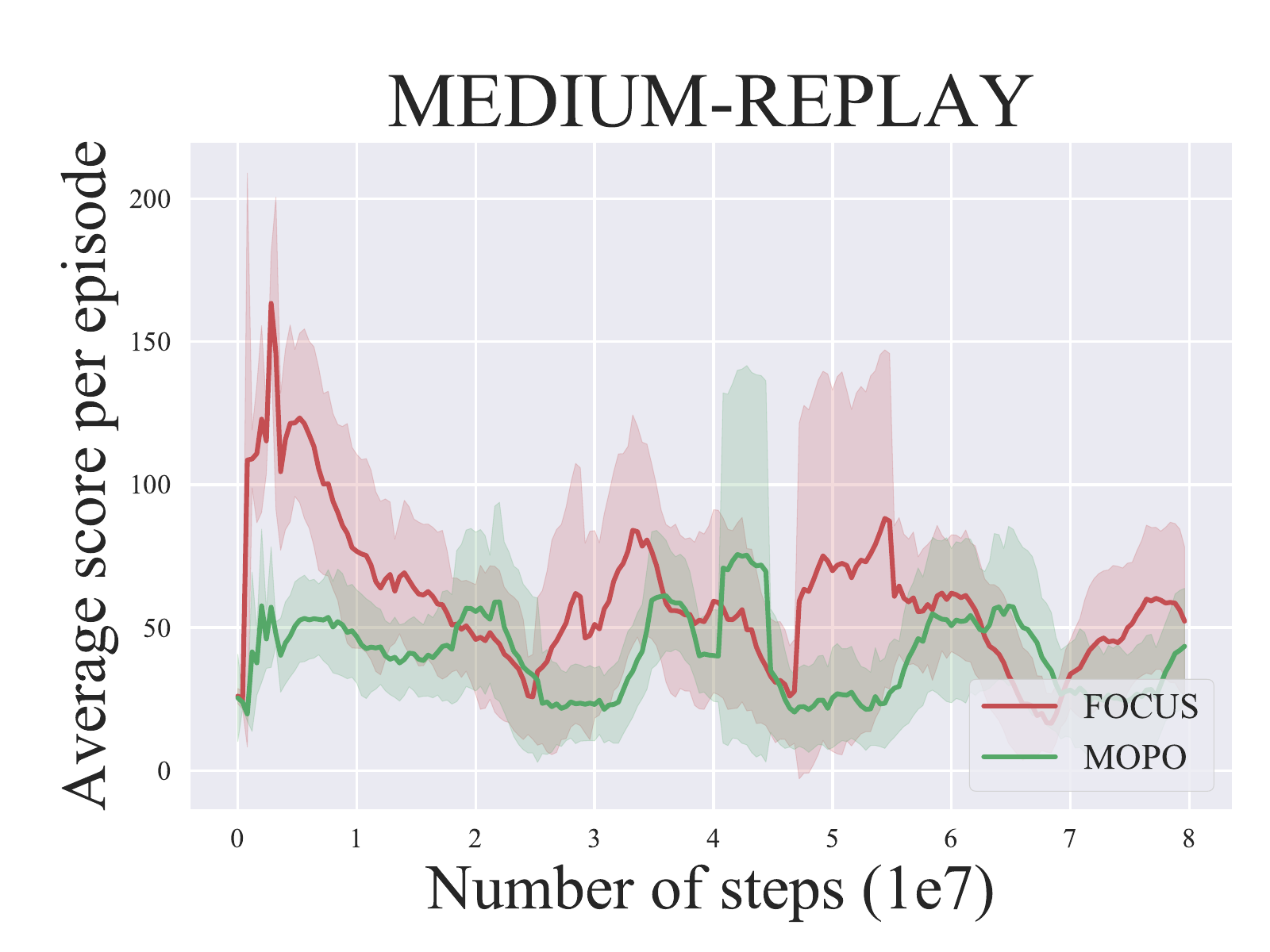}
        }
\vskip -0.1in
    \caption{Comparison of FOCUS and the baselines in the two benchmarks. 
    \textbf{(a)-(c):} The comparison in the Car Driving on the three datasets. 
    \textbf{(d)-(f):} The comparison in the Inverted Pendulum of MuJoCo on the three datasets.  
    }
   
    \label{fig-policy-return}

\end{figure}

\begin{figure}[h!]
\vskip -0.1in
\begin{center}
\begin{minipage}[b]{0.57\columnwidth}
\centerline{\includegraphics[width=0.9\columnwidth]{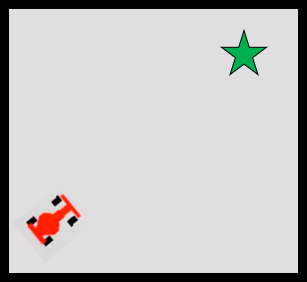}}
\end{minipage}
\begin{minipage}[b]{0.33\columnwidth}
\centerline{\includegraphics[width=0.9\columnwidth]{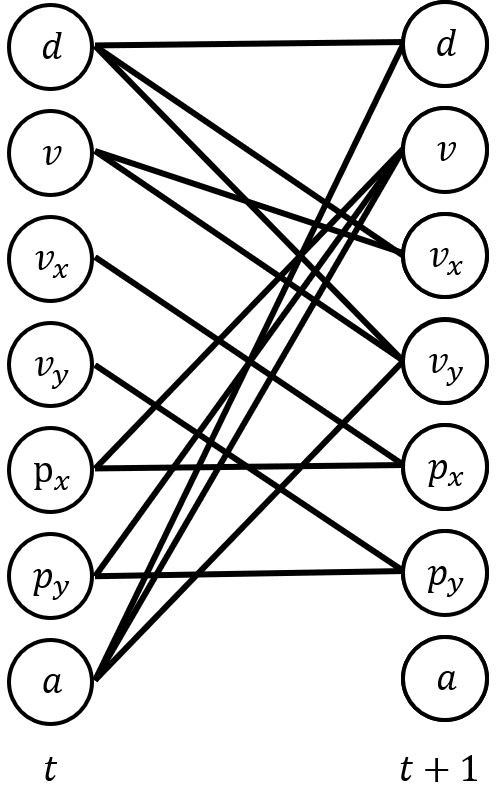}}
\end{minipage}
\caption{The visualization of the state and the causal structure for the Car Driving benchmark.
\textbf{Left:} the Toy Car Driving. The goal of the agent is to arrive at the star-shape destination.
\textbf{Right:} The ground truth of the causal structure in Toy Car Driving. The state is vector-based and its value is continuous.
}
\label{fig-benchmark}
\end{center}
\end{figure}

\end{document}